\def\aref#1{({\ref{#1}})}
\DeclareMathOperator*{\KL}{KL}
\DeclareMathOperator*{\IG}{IG}
\DeclareMathOperator*{\argmax}{argmax}
\DeclareMathOperator*{\argmin}{argmin}
\DeclareMathOperator*{\minimize}{minimize}
\begin{document}

\title{Multimodal Hierarchical Dirichlet Process-based Active Perception}

\author{\name Tadahiro Taniguchi \email taniguchi@ci.ritsumei.ac.jp \\
       \name Toshiaki Takano \email takano@em.ci.ritsumei.ac.jp \\
       \addr Department of Human and Computer Intelligence, Ritsumeikan University, \\
       Nojihigashi 1-1-1, Kusatsu, Shiga 525-8577 Japan.
       \AND
       \name Ryo Yoshino \email yoshino@em.ci.ritsumei.ac.jp  \\
       \addr Graduate School of Information Science and Engineering, Ritsumeikan University, \\
       Nojihigashi 1-1-1, Kusatsu, Shiga 525-8577 Japan.
       }

\editor{****}
\maketitle

\begin{abstract}
In this paper, we propose an active perception method for recognizing object categories based on the multimodal hierarchical Dirichlet process (MHDP). The MHDP enables a robot to form object categories using multimodal information, e.g., visual, auditory, and haptic information, which can be observed by performing actions on an object. 
However, performing many actions on a target object requires a long time.
In a real-time scenario, i.e., when the time is limited, the robot has to determine the set of actions that is most effective for recognizing a target object.
We propose an MHDP-based active perception method that uses the information gain (IG) maximization criterion and lazy greedy algorithm.
We show that the IG maximization criterion is optimal in the sense that the criterion is equivalent to a minimization of the expected Kullback--Leibler divergence between a final recognition state and the recognition state after the next set of actions. 
However, a straightforward calculation of IG is practically impossible.
Therefore, we derive an efficient Monte Carlo approximation method for IG by making use of a property of the MHDP. We also show that the IG has submodular and non-decreasing properties as a set function because of the structure of the graphical model of the MHDP. Therefore, the IG maximization problem is reduced to a submodular maximization problem. This means that greedy and lazy greedy algorithms are effective and have a theoretical justification for their performance. 
We conducted an experiment using an upper-torso humanoid robot and a second one using synthetic data. The experimental results show that the method enables the robot to select a set of actions that allow it to recognize target objects quickly and accurately. The results support our theoretical outcomes.
\end{abstract}
\begin{keywords}
  Active Perception, Cognitive Robotics, Topic model, Multimodal Machine Learning, Submodular Maximization 
\end{keywords}

\section{Introduction}\label{sec:1}
Active perception is a fundamental component of our cognitive skills. Human infants autonomously and spontaneously perform actions on an object to determine its nature.  
The sensory information that we can obtain usually depends on the actions performed on the target object. For example, when a person finds a box placed in front of him/her, he/she cannot perceive its weight without holding the box, and he/she cannot determine its sound without hitting or shaking it. In other words, we can obtain sensory information about an object by selecting and executing actions to manipulate it.
Adequate action selection is important for recognizing objects quickly and accurately. This example about a human also holds for a robot.
An autonomous robot that moves and helps people in a living environment should also select adequate actions to recognize target objects.
For example, when a person asks an autonomous robot to bring an empty plastic bottle, the robot has to examine many objects by applying several actions (Fig.~\ref{fig:action_selection}).
The importance of this type of active perception is because our object categories are formed on the basis of multimodal information, i.e., not only visual information, but also auditory, haptic, and other information. Therefore, a computational model of the active perception should be consistently based on a computational model for multimodal object categorization and recognition. 

\begin{figure}[!tb]
\begin{center}
\includegraphics[width=100mm]{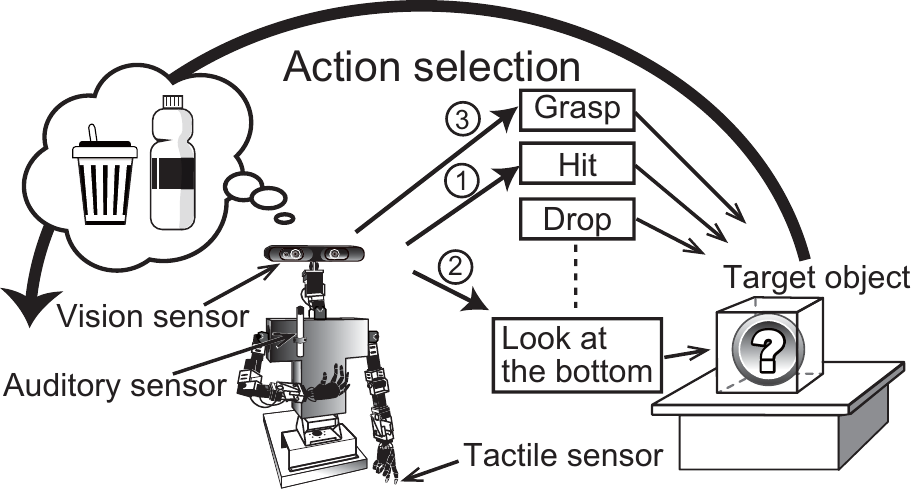}
\caption{Overview of active perception for multimodal object category recognition.}
\label{fig:action_selection}
\end{center}
\end{figure}

This paper considers the active perception problem for multimodal object recognition. Specifically, we adopt the multimodal hierarchical Dirichlet process (MHDP) proposed by Nakamura et al. (2011b) as a representative computational model for multimodal object categorization. We develop an active perception method based on the MHDP. The MHDP is a sophisticated, fully Bayesian probabilistic model for multimodal object categorization\nocite{MHDP}. It is a multimodal extension of hierarchical Dirichlet process (HDP)~\cite{hdp}, which is a nonparametric Bayesian extension of latent Dirichlet allocation (LDA)~\cite{Blei2003}, which in turn was originally proposed for document-word clustering. Nakamura et al. (2011b) showed that the MHDP enables a robot to form object categories using multimodal information, i.e., visual, auditory, and haptic information, in an unsupervised manner. Because of the nature of Bayesian nonparametrics, the MHDP can estimate the number of object categories as well.

In spite of the wide range of studies about active perception and multimodal categorization for robots, active perception methods, i.e., action selection methods for perception for multimodal categorization have not been sufficiently explored from a theoretical viewpoint (see Section~\ref{sec2}).  
This paper describes a new MHDP-based active perception method for multimodal object recognition based on object categories formed by a robot itself. We found that an active perception method that has a good theoretical nature 
can be derived by taking the MHDP as a robot's multimodal categorization method. 

In this study, we define the active perception problem in the context of unsupervised multimodal object categorization as follows.
\begin{itemize}
\item Which set of actions should a robot take to recognize a target object as accurately as possible under the constraint that the number of actions is restricted?
\end{itemize}
Our MHDP-based active perception method uses an information gain (IG) maximization criterion, Monte Carlo approximation, and the lazy greedy algorithm.
In this paper, we show that the MHDP provides the following three advantages for deriving an efficient active perception method.

\begin{enumerate}
\item The {\em IG maximization criterion} is {\em optimal} in the sense that a selected set of actions minimizes the expected Kullback--Leibler (KL) divergence between the final posterior distribution estimated using the information regarding all modalities and the posterior distribution of the category estimated using the selected set of actions.
\item An efficient {\em Monte Carlo approximation} method for IG can be derived.
\item The IG has a {\em submodular} and non-decreasing property as a set function. Therefore, for performance, the greedy and lazy greedy algorithms are guaranteed to be near-optimal strategies.
\end{enumerate}
Although the above desirable properties are due to the theoretical characteristics of the MHDP, this has never been pointed out in previous studies. 

The main contributions of this paper are that we present the above three properties of the MHDP clearly, develop an MHDP-based active perception method, and show its effectiveness through experiments using a upper-torso humanoid robot and synthetic data. 

The proposed active perception method can be used for general purposes, i.e., not only for robots but also for other target domains to which the MHDP can be applied. In addition, The proposed method can be easily extended for multimodal latent Dirichlet allocation (MLDA), which is a multimodal extension of latent Dirichlet allocation (LDA)~\cite{Nakamura2009,Blei2003}, and other multimodal categorization methods with similar graphical models. 
However, in this paper, we focus on the MHDP and the robot active perception scenario, and explain our method on the basis of this task.

The remainder of this paper is organized as follows. Section~\ref{sec2} describes the background and work related to our study. Section~\ref{sec3} briefly introduces the MHDP, proposed by Nakamura et al. (2011b), which enables a robot to obtain an object category by fusing multimodal sensor information in an unsupervised manner.
Section~\ref{sec4} describes our proposed action selection method. Section~\ref{sec5} discusses the effectiveness of the action selection method through experiments using an upper-torso humanoid robot. 
Section~\ref{sec6} describes a supplemental experiment using synthetic data. 
Section~\ref{sec7} concludes this paper.

\section{Background and Related Work}\label{sec2}
In this section, we describe background and related work of this paper. 
\subsection{Multimodal Categorization}
The human capability for object categorization is a fundamental topic in cognitive science~\cite{Barsalou1999}.
In the field of robotics, adaptive formation of object categories that considers a robot's embodiment, i.e., its sensory-motor system, is gathering attention as a way to solve the symbol grounding problem~\cite{harnad1990symbol,Taniguchi2015SER}.

Recently, various computational models and machine learning methods for multimodal object categorization have been proposed in artificial intelligence, cognitive robotics, and related research fields~\cite{Celikkanat,Sinapov,Natale,Araki2012,Ando,Nakamura2007,Nakamura2009,Nakamura2011,MHDP,Nakamura2014,Griffith2012,Iwahashi2010,Roy2002a,Sinapov2014}.
For example, Sinapov \& Stoytchev (2011) proposed a graph-based multimodal categorization method that allows a robot to recognize a new object by its similarity to a set of familiar objects.~\nocite{Sinapov}. 
They also built a robotic system that categorizes 100 objects from multimodal information in a supervised manner~\cite{Sinapov2014}. 
 Celikkanat et al. (2014) modeled the context in terms of a set of concepts that allow many-to-many relationships between objects and contexts using latent Dirichlet allocation.~\nocite{Celikkanat}.

\begin{figure}[tb!p]
\centering
\includegraphics[width=60mm]{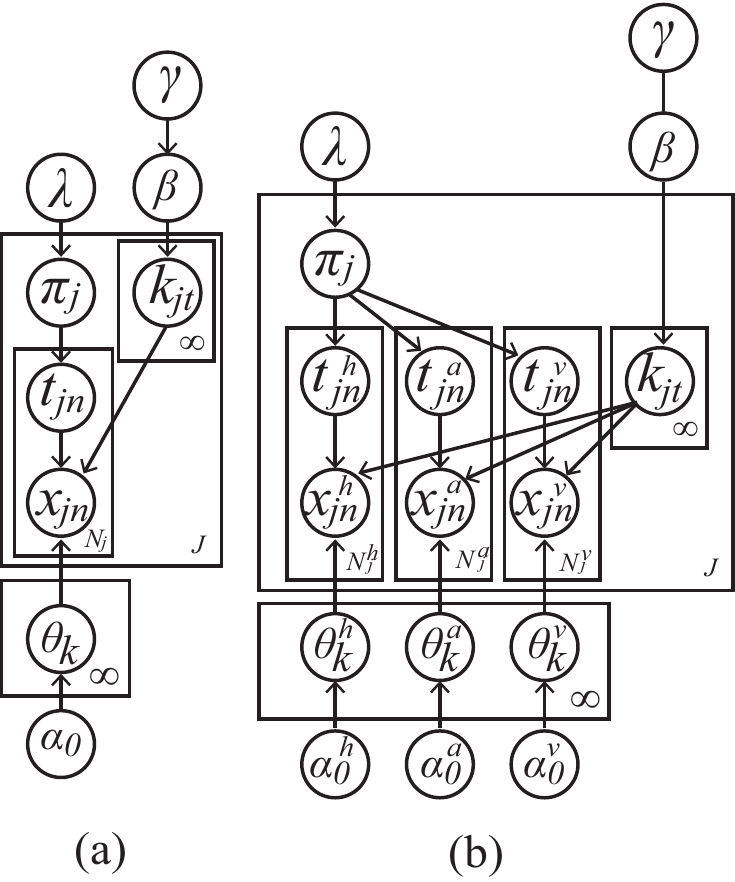}
\caption{Graphical representation of HDP~\cite{Sudderth2006} and MHDP~\cite{MHDP}.}
\label{fig:hdp-mhdp}
\end{figure}

Of these, a series of statistical multimodal categorization methods for autonomous robots have been proposed by extending LDA, i.e., a topic model~\cite{Araki2012,Ando,Nakamura2007,Nakamura2009,Nakamura2011,MHDP,Nakamura2014}. All these methods are Bayesian generative models,
and the MHDP is a representative method of this series~\cite{MHDP}.
The MHDP is an extension of the HDP, which was proposed by Teh et al. (2006), and
the HDP is a nonparametric Bayesian extension of LDA~\cite{Blei2003}.
A graphical model of the HDP is shown in Fig.~\ref{fig:hdp-mhdp}(a).
Concretely, the graphical model of the MHDP has multiple types of emissions that correspond to various sensor data obtained through various modality inputs, as shown in Fig.~\ref{fig:hdp-mhdp}(b).
In the HDP, observation data are usually represented as a bag-of-words (BoW). In contrast, the observation data in the MHDP use bag-of-features (BoF) representations for multimodal information. Latent variables $t_{jn}$ are regarded as indicators of  {\em topics} in the HDP, which correspond to {\em object categories} in the MHDP.
Nakamura et al. (2011b) showed that the MHDP enables a robot to categorize a large number of objects in a home environment into categories that are similar to human categorization results.

To obtain multimodal information, a robot has to perform actions and interact with a target object in various ways, e.g., grasping, shaking, or rotating the object. If the number of actions and types of sensor information increase, multimodal categorization and recognition can require a longer time.
In most practical cases, the execution of an action by a robot takes longer than it does for a human for mechanical and security reasons. In many cases, one action can take longer than 30 seconds, although that depends on each particular robotic system. 
When the recognition time is a constraint and/or if quick recognition is required, it becomes important for a robot to select a small number of actions that are effective for accurate recognition.
Action selection for recognition is often called active perception. 
However, an active perception method for the MHDP has not been proposed. This paper aims to provide an active perception method for the MHDP.

\subsection{Active Perception}
Generally, active perception is one of the most important cognitive capabilities of humans. 
From an engineering viewpoint, active perception has many specific tasks, e.g.,  localization, mapping, navigation, object recognition, object segmentation, and self--other differentiation.

Historically, active vision, i.e., active visual perception, has been studied as an important engineering problem in computer vision.
Roy et al. (2004) presented a comprehensive survey of active three-dimensional object recognition\nocite{DuttaRoy2004a}.
For example, Borotshnig et al. (2000) proposed an active vision method in a parametric eigenspace to improve the visual classification results\nocite{Borotschnig2000}.
Denzler et al. (2002) proposed an information theoretic action selection method to gather information that conveys the true state of a system through an active camera\nocite{Denzler2002}.
They used the mutual information (MI) as a criterion for action selection.
Krainin et al. (2011) developed an active perception method in which a mobile robot manipulates an object to build a three-dimensional surface model of it\nocite{Krainin2011a}.
Their method uses the IG criterion to determine when and how the robot should grasp the object.

Modeling and/or recognizing a single object as well as modeling a scene and/or segmenting objects are also important tasks in the context of robotics.
Eidenberger et al. (2010) proposed an active perception planning method for scene modeling in a realistic environment\nocite{Eidenberger2010a}. Hoof et al. (2012) proposed an active scene exploration method that enables an autonomous robot to efficiently segment a scene into its constituent objects by interacting with the objects in an unstructured environment\nocite{VanHoof2012a}. They used IG as a criterion for action selection.
InfoMax control for acoustic exploration was proposed by Rebguns et al. (2011)\nocite{Rebguns2011}. 

Localization, mapping, and navigation are also targets of active perception.
Velez et al. (2012) presented an online planning algorithm that enables a mobile robot to generate plans that maximize the expected performance of object detection\nocite{Velez2012}. 
Burgard et al. (1997) proposed an active perception method for localization\nocite{Fox}. Action selection is performed by maximizing the weighted sum of the expected entropy and expected costs.
To reduce the computational cost, they only consider a subset of the next locations.
Roy et al. (1999) proposed a coastal navigation method for a robot to generate trajectories for its goal by minimizing the positional uncertainty at the goal\nocite{Roy}.
Stachniss et al. (2005) proposed an information-gain-based exploration method for mapping and localization.\nocite{Stachniss}.
Correa et al. proposed an active perception method for a mobile robot with a visual sensor mounted on a pan-tilt mechanism to reduce localization uncertainty.
They used the IG criterion, which was estimated using a particle filter.

In addition, various studies on active perception by a robot have been conducted~\cite{Gouko,Saegusa2011,Ji2006,Tuci2010,Natale,Schneider2009a,Sushkov2012,Hogman2013,Ivaldi2014a,Fishel2012,Pape2012}.
In spite of a large number of contributions about active perception, 
 few theories of active perception for multimodal object category recognition have been proposed. In particular, an MHDP-based active perception method has not yet been proposed, although the MHDP-based categorization method and its series have obtained many successful results and extensions. 

In machine learning, {\em active learning} is a well-defined terminology. Active learning algorithms select an unobserved input datum and ask a user (labeler) to provide a training signal (label) in order to reduce uncertainty as quickly as possible~\cite{Cohn1996,Settles2012,Muslea2006}. These algorithms usually assume a supervised learning problem. This problem is related to the problem in this paper, but is fundamentally different.

\subsection{Active perception for multimodal categorization} 
Sinapov et al. (2014) investigated multimodal categorization and active perception by making a robot perform 10 different behaviors; obtain visual, auditory, and haptic information; explore 100 different objects, and classify them into 20 object categories\nocite{Sinapov2014}. In addition, they proposed an active behavior selection method based on confusion matrices. They reported that the method was able to reduce the exploration time by half by dynamically selecting the next exploratory behavior.
However, their multimodal categorization is performed in a supervised manner, and the theory of active perception is still heuristic. The method does not have theoretical guarantees of performance.

IG-based active perception is popular, as shown above, but the theoretical justification for using IG in each task is often missing in many robotics papers. 
Moreover, in many cases, IG cannot be evaluated directly, reliably, or accurately. When one takes an IG criterion-based approach, how to estimate the IG is an important problem.
In this study, we focus on MHDP-based active perception and develop an efficient near-optimal method based on firm theoretical justification.

\section{Multimodal Hierarchical Dirichlet Process for Statistical Multimodal Categorization}\label{sec3}
We assume that a robot forms object categories using the MHDP from multimodal sensory data. In this section, we briefly introduce the MHDP on which our proposed active perception method is based~\cite{MHDP}.  
The MHDP assumes that an observation node in its graphical model corresponds to an action and its corresponding modality. Nakamura et al. (2011b) employed three observation nodes in their graphical model, i.e., haptic, visual, and auditory information nodes. 
Three actions, i.e., grasping, looking around, and shaking, correspond to these modalities, respectively.
However, the MHDP can be easily extended to a model with additional types of sensory inputs. It is without doubt that autonomous robots will also gain more types of action for perception. 
For modeling more general cases, an MHDP with $M$ actions is described in this paper.
A more general graphical model of the MHDP than in Fig.~\ref{fig:hdp-mhdp} is illustrated in Fig.~\ref{fig:g-mhdp}.

\begin{figure}[tb!p]
\centering

\includegraphics[width=70mm]{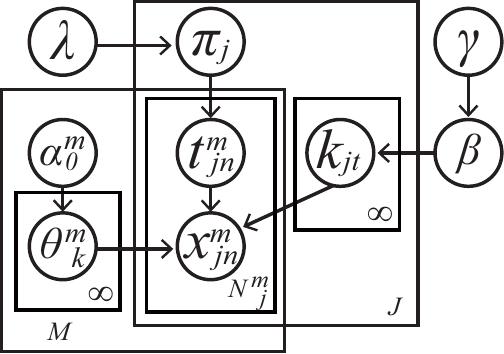}
\caption{Graphical representation of an MHDP with $M$ modalities corresponding to actions for perception.}
\label{fig:g-mhdp}
\end{figure}

The index $m \in \mathbf{M}$ $(\#(\mathbf{M}) = M)$ in Fig. \ref{fig:g-mhdp} represents the type of information that corresponds to an action-modality perception pair, e.g., hitting an object to obtain its sound, grasping an object to test its shape and hardness, or looking at all of an object by rotating it.
The observation $x_{jn}^{m} \in \mathbf{X}^m$ is the $m$-th modality's $n$-th feature for the $j$-th target object. The observation $x_{jn}^{m}$ is assumed to be drawn from a categorical distribution whose parameter is $\theta_{k}^{m}$, {where $k$ is an index of a latent topic}. Parameter $\theta_{k}^{m}$ is assumed to be drawn from the Dirichlet prior distribution whose parameter is $\alpha_{0}^{m}$. The MHDP assumes that a robot obtains each modality's sensory information as a BoF representation.

Similarly to the generative process of the original HDP~\cite{hdp}, the generative process of the MHDP can be described as a Chinese restaurant franchise (CRF). The learning and recognition algorithms are both derived using Gibbs sampling. In its learning process, the MHDP estimates a latent variable $t^m_{jn}$ for each feature of the $j$-th object and a topic index $k_{jt}$ for each latent variable $t$. The combination of latent variable and topic index corresponds to a topic in LDA~\cite{Blei2003}. Using the estimated latent variables, the categorical distribution parameter $\theta_{k}^{m}$ and topic proportion of the $j$-th object $\pi_j$ are drawn from the posterior distribution.

The selection procedure for latent variable $t^m_{jn}$ is as follows. The prior probability that $x_{jn}^{m}$ selects $t$ is
\begin{align}
& P(t^m_{jn} = t | \lambda)= \left\{
 \begin{array}{l}
\frac{\sum_{m}w^{m}N_{jt}^{m}}{\lambda+\sum_{m}w^{m}N^{m}_{j}-1},\hspace{10mm}(t=1,\cdots,T_{j}),\nonumber\\
\frac{\lambda}{\lambda+\sum_{m}w^{m}N^{m}_{j}-1},\hspace{10mm}(t=T_{j}+1),
\end{array} \right. \nonumber
\end{align}
where $w^{m}$ is a weight for the $m$-th modality,  $N^m_{jt}$ is the number of $m$-th modality observations that are allocated to $t$ in the $j$-th object, and $\lambda$ is a hyperparameter.
In the Chinese restaurant process, if the number of observed features $N_{jt}=\sum_m w^{m}N^{m}_{jt}$ that are allocated to $t$ increases, the probability at which a new observation is allocated to the latent variable $t$  increases.
Using the prior distribution, the posterior probability that observation $x^{m}_{jn}$ is allocated to the latent variable $t$ becomes
\begin{align}
 P(t^{m}_{jn}&=t| X^{m},\lambda)\propto P(x^{m}_{jn} | X^{m}_{k=k_{jt}})P(t^m_{jn} = t |\lambda)\nonumber \\
&= \left\{
 \begin{array}{l}
 P(x^{m}_{jn} | X^{m}_{k=k_{jt}})\frac{\sum_{m}w^{m}N^{m}_{jt}}{\lambda+\sum_{m}w^{m}N^{m}_{j}-1},\hspace{1mm}(t=1,\cdots,T_{j}),\nonumber\\
 P(x^{m}_{jn} | X^{m}_{k=k_{jt}})\frac{\lambda}{\lambda+\sum_{m}w^{m}N^{m}_{j}-1},\hspace{1mm}(t=T_{j}+1),\nonumber
 \end{array}
 \right.
\end{align}
where $N_{j}^{m}$ is the number of the $m$-th modality's observations about the $j$-th object. The observations that correspond to the $m$-th modality and have the $k$-th topic in any object are represented by $X_{k}^{m}$.

In the Gibbs sampling procedure, a latent variable for each observation is drawn from the posterior probability distribution. If $t = T_{j} + 1$, a new observation is allocated to a new latent variable.
The dish selection procedure is as follows.
The prior probability that the $k$-th topic is allocated on the $t$-th latent variable becomes
\begin{align}
 P(k_{jt} = k | \gamma)=\left\{
 \begin{array}{l}
\frac{M_{k}}{\gamma+M-1},\hspace{10mm}(k=1,\cdots,K),\\
\frac{\gamma}{\gamma+M-1},\hspace{10mm}(k=K+1),\nonumber
\end{array} \right.
\end{align}
where $K$ is the number of topic types, and $M_{k}$ is the number of latent variables on which the $k$-th topic is placed. 
Therefore, the posterior probability that the $k$-th topic is allocated on the $t$-th latent variable becomes
\begin{align}
 P(k_{jt}=k | X,\gamma) &= P(X_{jt} | X_{k})P(k_{jt}=k|\gamma)\nonumber \\
& = \left\{
 \begin{array}{l}
 P(X_{jt} | X_{k})\frac{M_{k}}{\gamma+M-1},\hspace{1mm}(k=1,\cdots,K),\\
 P(X_{jt} | X_{k})\frac{\gamma}{\gamma+M-1},\hspace{1mm}(k=K+1).\nonumber
 \end{array} \right.
\end{align}
A topic index for the latent variable $t$ for the $j$-th object is drawn using the posterior probability, where $\gamma$ is a hyperparameter.
If $k = K+1$, a new topic is placed on the latent variable.

By sampling $t^m_{jn}$ and $k_{jt}$, the Gibbs sampler performs probabilistic object clustering:
\begin{eqnarray}
t^{m}_{jn} &\sim& P(t^{m}_{jn} | X^{-mjn},\lambda),
\label{eq:t}\\
k_{jt} &\sim& P(k_{jt} | X^{-jt},\gamma),
\label{eq:k}
\end{eqnarray}
where $X^{-mjn} = X^{m}_{jn} \setminus \{x^{m}_{jn}\}$, and  $X^{-jt} = X_{t}\setminus X_{jt} $.
By sampling $t^{m}_{jn}$ for each observation in every object using (\ref{eq:t}) and sampling $k_{jt}$ for each latent variable $t$ in every object using (\ref{eq:k}), all of the latent variables in the MHDP can be inferred.

If $t^{m}_{jn}$ and $k_{jt}$ are given, the probability that the $j$-th object is included in the $k$-th category becomes
\begin{eqnarray}
P(k | X_{j}) = \frac{\Sigma^{T_{j}}_{t=1}\delta_{k}(k_{jt})\sum_{m}w^{m}N^{m}_{jt}}{\sum_{m}w^{m}N^{m}_{j}},
\label{eq:p}
\end{eqnarray}
where $X_{j}=\cup_m X^m_{j}$, $w^{m}$ is the weight for the $m$-th modality and $\delta_{a}(x)$ is a delta function.

When a robot attempts to recognize a new object after the learning phase, the probability that feature $x^m_{jn}$ is generated from the $k$-th topic becomes
\begin{equation}
P(x^{m}_{jn} | X_{k}^{m})=\frac{w^{m}N_{kx^{m}_{jn}}^{m}+\alpha_{0}^{m}}{w^{m}N_{k}^{m}+d^{m}\alpha_{0}^{m}},\nonumber
\end{equation}
where $d^m$ denotes the dimension of the $m$-th modality input.
Topic $k_t$ allocated to $t$ for a new object is sampled from
\begin{equation}
k_t \sim P(k_{jt}=k|X,\gamma) \propto P(X_{jt} | X_k)\frac{\gamma}{\gamma +M - 1}.\nonumber
\end{equation}
These sampling procedures play an important role in the Monte Carlo approximation of our proposed method (see Section 4.2.) 

For a more detailed explanation of the MHDP, please refer to Nakamura et al. (2011b)\nocite{MHDP}.
Basically, a robot can autonomously learn object categories and recognize new objects using the multimodal categorization procedure described above. The performance and effectiveness of the method was evaluated in the paper.

\section{Active Perception Method}\label{sec4}
In this section, we describe active perception method based on the MHDP. 
\subsection{Basic Formulation}
A robot should have already conducted several actions and obtained information from several modalities when it attempts to select next action set for recognizing a target object. For example, visual information can usually be obtained by looking at the front face of the $j$-th object from a distance before interacting with the object physically.
We assume that a robot has already obtained information corresponding to a subset of modalities $\mathbf{m_o}_j \subset  \mathbf{M} $. When a robot faces a new object and has not obtained any information, $\mathbf{m_o}_j = \emptyset $.

The purpose of object recognition in multimodal categorization is different from conventional supervised learning-based pattern recognition problems. In supervised learning, the recognition result is evaluated by checking whether the output is same as the truth label. However, in unsupervised learning, there are basically no truth labels. Therefore, the performance of active perception should be measured in a different manner. 

The action set the robot selects is described as $\mathbf{A}= \{a_1, a_2, \ldots ,  a_{\#(\mathbf{A})}\}$ $ \in \mathbf{2}^{\mathbf{M} \setminus \mathbf{m_o}_{j}}$, where $ \mathbf{2}^{\mathbf{M} \setminus \mathbf{m_o}_{j}}$ is a family of subsets of ${\mathbf{M} \setminus \mathbf{m_o}_{j}}$, i.e., $\mathbf{A} \subset \mathbf{M}\setminus \mathbf{m_o}_{j}$ and $a_i \in \mathbf{M}\setminus \mathbf{m_o}_{j}$.
We consider an effective action set for active perception to be one that largely reduces the distance between the final recognition state after the information from all modalities $\mathbf{M}$ is obtained and the recognition state after the robot executes the selected action set $\mathbf{A} $. 
The recognition state is represented by the posterior distribution $P( z_j | X_j^{\mathbf{m_o}_j \cup \mathbf{A}})$. Here, $z_j = \{\{k_{jt}\}_{1 \le t \le T_j},$ $ \{t_{jn}^m\}_{m \in \mathbf{M}, 1 \le n \le N^m_j}\}$ is a latent variable representing the $j$-th object's topic information, where $X_j^\mathbf{A}  = \cup_{m\in \mathbf{A} } X_j^m $,
$X^m_j = \{ x^m_{j1},\dots,x^m_{jn},$ $\dots,x^m_{jN_j^m} \}$. 
Probability $P( z_j | X_j^{\mathbf{m_o}_j \cup \mathbf{A}})$ represents the posterior distribution related to the object category after taking actions $\mathbf{m_o}_j$ and $\mathbf{A}$.

The final recognition state, i.e., posterior distribution over latent variables after obtaining the information from all modalities $\mathbf{M}$, becomes $P( z_j | X_j^{\mathbf{M}} )$.
The purpose of active perception is to select a set of actions that
can estimate the posterior distribution most accurately.  
When $L$ actions can be executed, if we employ KL divergence as the metric of the difference between the two probability distributions,
\begin{equation}
\minimize_{\mathbf{A} \in \mathbf{F}^{\mathbf{m_o}_j}_L} \KL \big( P( z_j | X_j^{\mathbf{M}} ) , P( z_j | X^{\mathbf{m_o}_j \cup \mathbf{A}}_j ) \big) \label{eq:goal_KL_det}
\end{equation}
is a reasonable evaluation criterion for realizing effective active perception, where $\mathbf{F}_L^{\mathbf{m_o}_j} = \{ \mathbf{A}  | \mathbf{A}  \subset \mathbf{M} \setminus \mathbf{m_o}_{j}, \#({\mathbf{A}})\le L \}$ is a feasible set of actions.

However, neither the true $X_j^{\mathbf{M}}$ nor $X^{\mathbf{m_o}_j \cup \mathbf{A}}_j $ can be observed before taking $\mathbf{A}$ on the $j$-th target object, and hence cannot be used at the moment of action selection. Therefore, a rational alternative for the evaluation criterion is the expected value of the KL divergence at the moment of action selection:
\begin{equation}
\minimize_{\mathbf{A} \in \mathbf{F}^{\mathbf{m_o}_j}_L} \mathbb{E}_{ X_j^{\mathbf{M}\setminus \mathbf{m_o}_j} | X^{\mathbf{m_o}_j}_j } [ \KL \big( P( z_j | X_j^{\mathbf{M}} ) , P( z_j | X_j^{\mathbf{m_o}_j  \cup  \mathbf{A}    }   ) \big)]. \label{eq:exp_kl_crt}
\end{equation}

Here, we propose to use the IG maximization criterion to select the next action set for active perception:
\begin{align}
\mathbf{A}_j^* &= \argmax_{\mathbf{A} \in \mathbf{F}^{\mathbf{m_o}_j}_L}
\IG (z_j ;X_j^\mathbf{A} | X_j^{\mathbf{m_o}_j} )
\label{eq:IG_max}\\
 &= \argmax_{\mathbf{A} \in \mathbf{F}^{\mathbf{m_o}_j}_L}
 \mathbb{E}_{ X_j^\mathbf{A} | X_j^{\mathbf{m_o}_{j} }}
 [ \KL \big( P( z_j | X_j^{\mathbf{m_o}_j \cup \mathbf{A}}) , P( z_j | X_j^{\mathbf{m_o}_j } ) \big) ],  \label{eq:KL_max}
\end{align}
where $\IG (X;Y |Z)$ is the IG of $Y$ for $X$, which is calculated on the basis of the probability distribution commonly conditioned by $Z$ as follows:
\begin{equation}
\IG (X;Y |Z) =\KL \big(P(X,Y|Z) , P(X|Z)P(Y|Z) \big).\nonumber
\end{equation}
By definition, the expected KL divergence is the same as $\mathrm{IG} (X; Y)$. The definition of $\IG$ and its relation to KL divergence are as follows.
\begin{eqnarray}
\IG (X;Y)&=& H(X ) - H(X | Y) \nonumber\\
&=& \KL \big(P(X,Y) , P(X)P(Y) \big)\nonumber\\
&=& \mathbb{E}_Y [\KL \big(P(X|Y),P(X) \big) ].\nonumber
\end{eqnarray}
The optimality of the proposed criterion \aref{eq:IG_max} is supported by Theorem~\ref{theo:1}.
\begin{theorem}\label{theo:1}
The set of next actions $\mathbf{A} \in \mathbf{F}^{\mathbf{m_o}_j}_L $ that maximizes the $\IG (z_j ;X_j^\mathbf{A} | X_j^{\mathbf{m_o}_j} )$
minimizes the expected KL divergence between
the posterior distribution over $z_j$ after all modality information has been observed and after $\mathbf{A}$ has been executed.
\begin{eqnarray}
\argmin_{\mathbf{A} \in \mathbf{F}^{\mathbf{m_o}_j}_L} \mathbb{E}_{ X_j^{\mathbf{M}\setminus \mathbf{m_o}_j} | X^{\mathbf{m_o}_j}_j } [ \KL \big( P( z_j | X_j^{\mathbf{M}} ) , P( z_j | X_j^{\mathbf{m_o}_j \cup   \mathbf{A}   } ) \big)] 
= \argmax_{\mathbf{A} \in \mathbf{F}^{\mathbf{m_o}_j}_L}
\IG (z_j ;X_j^\mathbf{A} | X_j^{\mathbf{m_o}_j} )\nonumber
\end{eqnarray}
\end{theorem}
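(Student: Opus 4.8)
The plan is to rewrite the expected KL divergence on the left-hand side as a single conditional information gain, and then split that quantity with the chain rule for mutual information. Write $\mathbf{B} = \mathbf{M}\setminus(\mathbf{m_o}_j\cup\mathbf{A})$ for the modalities that remain unobserved after $\mathbf{m_o}_j$ and $\mathbf{A}$ have been executed, so that $X_j^{\mathbf{M}} = X_j^{\mathbf{m_o}_j}\cup X_j^{\mathbf{A}}\cup X_j^{\mathbf{B}}$ and, crucially, $X_j^{\mathbf{A}}\cup X_j^{\mathbf{B}} = X_j^{\mathbf{M}\setminus\mathbf{m_o}_j}$ regardless of which $\mathbf{A}$ is chosen. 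Throughout, all probabilities are those induced by the learned MHDP, conditioned on $X_j^{\mathbf{m_o}_j}$.

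First I would expand the inner KL divergence as an expectation of a log-ratio, $\KL\big(P(z_j|X_j^{\mathbf{M}}),P(z_j|X_j^{\mathbf{m_o}_j\cup\mathbf{A}})\big) = \mathbb{E}_{z_j|X_j^{\mathbf{M}}}\big[\log\tfrac{P(z_j|X_j^{\mathbf{m_o}_j\cup\mathbf{A}\cup\mathbf{B}})}{P(z_j|X_j^{\mathbf{m_o}_j\cup\mathbf{A}})}\big]$, and then take the outer expectation over $X_j^{\mathbf{M}\setminus\mathbf{m_o}_j}=X_j^{\mathbf{A}}\cup X_j^{\mathbf{B}}$ given $X_j^{\mathbf{m_o}_j}$. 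Merging the two nested expectations gives an expectation of the same log-ratio under the joint law of $(z_j,X_j^{\mathbf{A}},X_j^{\mathbf{B}})$ given $X_j^{\mathbf{m_o}_j}$; since $\tfrac{P(z_j|X_j^{\mathbf{m_o}_j\cup\mathbf{A}\cup\mathbf{B}})}{P(z_j|X_j^{\mathbf{m_o}_j\cup\mathbf{A}})} = \tfrac{P(z_j,X_j^{\mathbf{B}}|X_j^{\mathbf{m_o}_j\cup\mathbf{A}})}{P(z_j|X_j^{\mathbf{m_o}_j\cup\mathbf{A}})\,P(X_j^{\mathbf{B}}|X_j^{\mathbf{m_o}_j\cup\mathbf{A}})}$, this expectation is exactly $\IG(z_j;X_j^{\mathbf{B}}\mid X_j^{\mathbf{m_o}_j\cup\mathbf{A}})$ by the definition of conditional IG given in the text. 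So the left-hand objective equals $\IG(z_j;X_j^{\mathbf{B}}\mid X_j^{\mathbf{m_o}_j\cup\mathbf{A}})$.

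Next I would apply the chain rule for information gain, which follows directly from the definition together with $P(X_j^{\mathbf{A}},X_j^{\mathbf{B}}\mid\cdot)=P(X_j^{\mathbf{A}}\mid\cdot)\,P(X_j^{\mathbf{B}}\mid X_j^{\mathbf{A}},\cdot)$:
\[
\IG\big(z_j;X_j^{\mathbf{M}\setminus\mathbf{m_o}_j}\mid X_j^{\mathbf{m_o}_j}\big)
= \IG\big(z_j;X_j^{\mathbf{A}}\mid X_j^{\mathbf{m_o}_j}\big)
+ \IG\big(z_j;X_j^{\mathbf{B}}\mid X_j^{\mathbf{m_o}_j\cup\mathbf{A}}\big).
\]
The left-hand side here involves only the fixed set $\mathbf{M}\setminus\mathbf{m_o}_j$, hence is a constant $C$ that does not depend on $\mathbf{A}$. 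Combining with the previous step, the expected KL divergence equals $C-\IG(z_j;X_j^{\mathbf{A}}\mid X_j^{\mathbf{m_o}_j})$, so minimizing it over $\mathbf{A}\in\mathbf{F}^{\mathbf{m_o}_j}_L$ is the same as maximizing $\IG(z_j;X_j^{\mathbf{A}}\mid X_j^{\mathbf{m_o}_j})$, which is the assertion of the theorem.

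The hard part will not be any calculation but the bookkeeping: one must be careful that the outer expectation runs over all of $X_j^{\mathbf{M}\setminus\mathbf{m_o}_j}$ (both the modalities $\mathbf{A}$ that would actually be observed and the modalities $\mathbf{B}$ that never are), not over $X_j^{\mathbf{B}}$ alone, and one must track the conditioning set in each IG term, since the term that reduces to the expected KL is conditioned on $X_j^{\mathbf{m_o}_j\cup\mathbf{A}}$ rather than on $X_j^{\mathbf{m_o}_j}$. Once these are pinned down, the proof is just the chain rule, and it uses nothing about the MHDP beyond the consistency of its joint distribution over $(z_j, X_j^1,\dots,X_j^M)$; the submodular and non-decreasing structure is needed later for the greedy algorithm, not for this equivalence.
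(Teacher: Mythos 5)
Your proposal is correct, and it reaches the theorem by a genuinely different route than the paper. The paper's own proof (Appendix A) works term by term inside the $\argmin$: it expands the KL divergence as a double sum, discards the $\log P(z_j|X_j^{\mathbf{M}})$ term in the numerator because it does not depend on $\mathbf{A}$, marginalizes out $X_j^{\mathbf{M}\setminus(\mathbf{m_o}_j\cup\mathbf{A})}$, inserts the $\mathbf{A}$-independent entropy $-\sum_{z_j}P(z_j|X_j^{\mathbf{m_o}_j})\log P(z_j|X_j^{\mathbf{m_o}_j})$, and recognizes the result as $\mathbb{E}_{X_j^{\mathbf{A}}|X_j^{\mathbf{m_o}_j}}[\KL(P(z_j|X_j^{\mathbf{m_o}_j\cup\mathbf{A}}),P(z_j|X_j^{\mathbf{m_o}_j}))]=\IG(z_j;X_j^{\mathbf{A}}|X_j^{\mathbf{m_o}_j})$. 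You instead identify the objective exactly as the conditional information gain $\IG(z_j;X_j^{\mathbf{B}}\mid X_j^{\mathbf{m_o}_j\cup\mathbf{A}})$ of the never-observed modalities $\mathbf{B}$ and apply the chain rule, so that the objective equals $\IG(z_j;X_j^{\mathbf{M}\setminus\mathbf{m_o}_j}\mid X_j^{\mathbf{m_o}_j})-\IG(z_j;X_j^{\mathbf{A}}\mid X_j^{\mathbf{m_o}_j})$. Both arguments ultimately rest on the same fact --- the terms being dropped or added are constant in $\mathbf{A}$ --- but your packaging yields an exact additive identity rather than only an $\argmin=\argmax$ equivalence, which is slightly stronger: it quantifies the suboptimality of any candidate $\mathbf{A}$ and makes the link to the later submodularity/greedy analysis transparent. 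The price is the bookkeeping you already flag: $\IG(z_j;X_j^{\mathbf{B}}\mid X_j^{\mathbf{m_o}_j\cup\mathbf{A}})$ must be read as conditioning on the \emph{observed value} of $X_j^{\mathbf{m_o}_j}$ while \emph{averaging} over the not-yet-observed $X_j^{\mathbf{A}}$, which goes a little beyond the single-conditioner definition of $\IG(X;Y|Z)$ given in Section 4.1; the paper's direct manipulation of the sums avoids having to define that mixed object. Your closing remark is also accurate: only consistency of the joint distribution over $(z_j,X_j^1,\dots,X_j^M)$ is used here, not the conditional-independence structure needed for Theorem 2.
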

\begin{proof}
See Appendix A.
\end{proof}
This theorem is essentially the result of well-known characteristics of IG (see \cite{Russo2015,MacKay2003} for example). 
This means that maximizing IG
 is the optimal policy for active perception in an MHDP-based  multimodal object category recognition task.
As a special case, when only a single action is permitted, the following corollary is satisfied.
\begin{corollary}
The next action $m\in \mathbf{M} \setminus \mathbf{m_o}_j$ that maximizes 
$\IG (z_j ;X_j^m | X_j^{\mathbf{m_o}_j} )$
minimizes the expected KL divergence between
the posterior distribution over $z_j$ after all modality information has been observed and after the action has been executed.
\begin{align}
\argmin_{m \in \mathbf{M} \setminus \mathbf{m_o}_j} \mathbb{E}_{ X^{\mathbf{M}\setminus \mathbf{m_o}_j}_j | X^{\mathbf{m_o}_j}_j } [ \KL \big( P( z_j | X_j^{\mathbf{M}} ) , P( z_j | X^{\{m\} \cup \mathbf{m_o}_j}_j ) \big)]
= \argmax_{m \in \mathbf{M} \setminus \mathbf{m_o}_{j}} \IG (z_j ;X_j^m | X_j^{\mathbf{m_o}_j} ).\label{eq:policy}
\end{align}
\end{corollary}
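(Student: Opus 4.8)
The plan is to recognize the whole statement as an instance of the chain rule for mutual information, so that essentially everything reduces to information‑theoretic bookkeeping. First I would fix the realized value of the already‑observed data $O := X_j^{\mathbf{m_o}_j}$, which is held as a constant throughout since every quantity in the theorem is conditioned on it, and for a candidate set $\mathbf{A} \in \mathbf{F}^{\mathbf{m_o}_j}_L$ I would split the not‑yet‑observed modalities $\mathbf{M}\setminus\mathbf{m_o}_j$ into $\mathbf{A}$ and its complement $\mathbf{B} := (\mathbf{M}\setminus\mathbf{m_o}_j)\setminus\mathbf{A}$, writing $Y := X_j^{\mathbf{A}}$ and $R := X_j^{\mathbf{B}}$. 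Then $X_j^{\mathbf{M}} = O\cup Y\cup R$ and $X_j^{\mathbf{m_o}_j\cup\mathbf{A}} = O\cup Y$, and crucially the relabeling $X_j^{\mathbf{M}\setminus\mathbf{m_o}_j} = (Y,R)$ is valid for every choice of $\mathbf{A}$. Since $z_j$ and all BoF observations are discrete, every KL divergence and information quantity below is a finite sum, so no measure‑theoretic care is needed.

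The first key step is to observe that the objective to be minimized is itself a conditional information gain. Unfolding the definitions of $\IG$ and $\KL$ given just before the theorem and iterating the expectation over $Y$ and then $R$,
\begin{align}
\mathbb{E}_{X_j^{\mathbf{M}\setminus\mathbf{m_o}_j}\mid X_j^{\mathbf{m_o}_j}}\big[\KL\big(P(z_j\mid X_j^{\mathbf{M}}),\,P(z_j\mid X_j^{\mathbf{m_o}_j\cup\mathbf{A}})\big)\big] &= \mathbb{E}_{Y\mid O}\,\mathbb{E}_{R\mid O,Y}\big[\KL\big(P(z_j\mid O,Y,R),\,P(z_j\mid O,Y)\big)\big] \nonumber\\
&= \IG\big(z_j;R\mid O,Y\big). \nonumber
\end{align}

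Next I would invoke the chain rule for mutual information (equivalently, additivity of information gain, which follows from the entropy identity $\IG(X;Y)=H(X)-H(X\mid Y)$ quoted before the theorem, applied conditionally on $O$): $\IG(z_j;Y,R\mid O)=\IG(z_j;Y\mid O)+\IG(z_j;R\mid O,Y)$. The decisive observation is then that $\IG(z_j;Y,R\mid O)=\IG(z_j;X_j^{\mathbf{M}\setminus\mathbf{m_o}_j}\mid X_j^{\mathbf{m_o}_j})$ does not depend on $\mathbf{A}$ at all, since $Y\cup R$ is the whole remaining data regardless of how it is split; call this constant $C$. Combining the two steps, the objective equals $C-\IG(z_j;X_j^{\mathbf{A}}\mid X_j^{\mathbf{m_o}_j})$, so minimizing it over $\mathbf{A}\in\mathbf{F}^{\mathbf{m_o}_j}_L$ coincides with maximizing $\IG(z_j;X_j^{\mathbf{A}}\mid X_j^{\mathbf{m_o}_j})$, which is exactly the claimed identity; the corollary is the special case $L=1$, $\mathbf{A}=\{m\}$.

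I do not expect a genuine obstacle here: the argument is pure information theory and uses nothing about the MHDP's graphical structure, so it in fact holds for any joint model of $(z_j, X_j^1,\dots,X_j^M)$. The only points needing a little care are (i) verifying that $\mathbb{E}_{X_j^{\mathbf{M}\setminus\mathbf{m_o}_j}\mid X_j^{\mathbf{m_o}_j}}$ really factors as the iterated expectation $\mathbb{E}_{Y\mid O}\,\mathbb{E}_{R\mid O,Y}$ — immediate, since $(Y,R)$ is merely a relabeling of $X_j^{\mathbf{M}\setminus\mathbf{m_o}_j}$ — and (ii) keeping the conditioning on the realized value of $X_j^{\mathbf{m_o}_j}$ consistent across every term (alternatively one may append an outer expectation over $X_j^{\mathbf{m_o}_j}$ and the identity survives verbatim).
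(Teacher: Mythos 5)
Your proposal is correct, and it mirrors the paper's handling of the corollary itself: prove the statement for a general action set $\mathbf{A}$ (Theorem~1) and then substitute $\mathbf{A}=\{m\}$. Where you differ is in how Theorem~1 is established. The paper's Appendix~A is a direct computation: it writes the expected KL divergence as an explicit double sum, discards the $\log P(z_j\mid X_j^{\mathbf{M}})$ term because it does not depend on $\mathbf{A}$, marginalizes out $X_j^{\mathbf{M}\setminus(\mathbf{m_o}_j\cup\mathbf{A})}$, and then adds the $\mathbf{A}$-independent entropy term $-\sum_{z_j}P(z_j\mid X_j^{\mathbf{m_o}_j})\log P(z_j\mid X_j^{\mathbf{m_o}_j})$ so that the surviving expression reassembles into $\mathbb{E}_{X_j^{\mathbf{A}}\mid X_j^{\mathbf{m_o}_j}}[\KL(P(z_j\mid X_j^{\mathbf{A}\cup\mathbf{m_o}_j}),P(z_j\mid X_j^{\mathbf{m_o}_j}))]=\IG(z_j;X_j^{\mathbf{A}}\mid X_j^{\mathbf{m_o}_j})$. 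You instead identify the objective outright as the conditional information gain $\IG(z_j;R\mid O,Y)$ and invoke the chain rule $\IG(z_j;Y,R\mid O)=\IG(z_j;Y\mid O)+\IG(z_j;R\mid O,Y)$, so the objective equals the split-independent constant $\IG(z_j;X_j^{\mathbf{M}\setminus\mathbf{m_o}_j}\mid X_j^{\mathbf{m_o}_j})$ minus the quantity to be maximized. The two arguments carry out the same decomposition --- the term the paper drops and the constant it adds are exactly the entropies whose difference is your constant $C$ --- but your version names the constant, replaces the sum manipulation with a standard lemma, and makes explicit what the paper only remarks in passing: the result uses nothing about the MHDP and holds for any joint model of $(z_j,X_j^1,\dots,X_j^M)$. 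One small point you already handle but that deserves the care you give it: your $\IG(z_j;R\mid O,Y)$ averages over $Y$ while the paper's notation $\IG(\cdot;\cdot\mid X_j^{\mathbf{m_o}_j})$ conditions on a realized value, so the conditioning conventions must be kept straight when the two are combined in the chain rule.
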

\begin{proof}
By substituting $\{m\}$ into $\mathbf{A} $ in Theorem \ref{theo:1}, we can obtain the corollary.
\end{proof}
Using $\IG$, the active perception strategy for the next single action is simply described as follows:
\begin{equation}
m_j^* = \argmax_{m \in \mathbf{M} \setminus \mathbf{m_o}_{j}}
\IG (z_j ;X_j^m | X_j^{\mathbf{m_o}_j} ).\label{eq:policy}
\end{equation}
This means that the robot should select the action $m_j^* $ that can obtain the $X_j^{m_j^*} $ that maximizes the IG for the recognition result $z_j$ under the condition that the robot has already observed $X_j^{\mathbf{m_o}_j}$.

However, we still have two problems, as follows.
\begin{enumerate}
\item The calculation of $\IG (z_j ;X_j^\mathbf{A} | X_j^{\mathbf{m_o}_j} )$ cannot be performed in a straightforward manner. 
\item The $\argmax$ operation in \aref{eq:IG_max} is a combinatorial optimization problem and incurs heavy computational cost when $\#(\mathbf{M} \setminus \mathbf{m_o}_{j})$ and $L$ become large.
\end{enumerate}
Based on some properties of the MHDP, we can obtain reasonable solutions for these two problems. 

\subsection{Monte Carlo Approximation of IG}
Equations \aref{eq:IG_max} and \aref{eq:policy} provide a robot with an appropriate criterion for selecting an action to efficiently recognize a target object. However,  at first glance, it looks difficult to calculate the IG. First, the calculation of the expectation procedure
$\mathbb{E}_{ X^{\mathbf{A} }_j | X^{\mathbf{m_o}_j}_j }[\cdot]$ requires a sum operation over all possible $X_j^\mathbf{A} $. The number of possible $X_j^\mathbf{A} $ exponentially increases when the number of elements in the BoF increases.
Second, the calculation of $P( z_j | X^{\mathbf{A}  \cup \mathbf{m_o}_j}_j )$ for each possible observation $X^\mathbf{A} _j$ requires the same computational cost as recognition in the multimodal categorization itself. Therefore, the straightforward calculation for solving (\ref{eq:policy}) is computationally impossible in a practical sense. 

However, by exploiting a characteristic property of the MHDP, an efficient Monte Carlo approximation can be derived. First, we describe $\IG$ as the expectation of a logarithm term. 

\begin{eqnarray}
\IG(z_j; X^m_j | X_j^{\mathbf{m_o}_j}) 
&=& \sum_{z_j, X^m_j}P(z_j, X^m_j | X_j^{\mathbf{m_o}_j}) \log \frac{P(z_j, X^m_j | X_j^{\mathbf{m_o}_j})}{P(z_j | X_j^{\mathbf{m_o}_j})P(X^m_j | X_j^{\mathbf{m_o}_j})}\nonumber\\
&=& \mathbb{E}_{z_j, X^m_j | X_j^{\mathbf{m_o}_j}}\big[ \log \frac{P(z_j, X^m_j | X_j^{\mathbf{m_o}_j})}{P(z_j | X_j^{\mathbf{m_o}_j})P(X^m_j | X_j^{\mathbf{m_o}_j})}\big]. \label{eq:ig_exp}
\end{eqnarray}

An analytic evaluation of (\ref{eq:ig_exp}) is also practically impossible. Therefore, we adopt a Monte Carlo method. Equation (\ref{eq:ig_exp}) suggests that an efficient Monte Carlo approximation can be performed as shown below if we can sample
\begin{equation}
(z^{[k]}_j, X^{m[k]}_j) \sim P(z_j, X^m_j|X_j^{\mathbf{m_o}_j} ),~~~ (k \in \{1, \ldots , K\}).\nonumber
\end{equation}
Fortunately, the MHDP provides a sampling procedure for  $z^{[k]}_j \sim P(z_j|X_j^{\mathbf{m_o}_j} )$ and $X^{m[k]}_j \sim P(X^m_j | z^{[k]}_j)$ in its original paper~\cite{MHDP}.
In the context of  multimodal categorization by a robot, $X^{m[k]}_j \sim P(X^m_j | z^{[k]}_j)$ is a prediction of an unobserved modality's sensation using observed modalities' sensations, i.e., cross-modal inference. The sampling process of $(z^{[k]}_j, X^{m[k]}_j) $ can be regarded as a mental simulation by a robot that predicts the unobserved modality's sensation leading to a categorization result based on the predicted sensation and observed information.
\begin{eqnarray}
(\ref{eq:ig_exp}) &\approx & \frac{1}{K}\sum_k \log \frac{P(z_j^{[k]}, X^{m[k]}_j | X_j^{\mathbf{m_o}_j})}{P(z_j^{[k]} | X_j^{\mathbf{m_o}_j})P(X^{m[k]}_j | X_j^{\mathbf{m_o}_j})}\nonumber\\
&= & \frac{1}{K}\sum_k \log \frac{P(X^{m[k]}_j | z_j^{[k]} , X_j^{\mathbf{m_o}_j})}{\underbrace{P(X^{m[k]}_j | X_j^{\mathbf{m_o}_j})}_{*}}. \label{eq:sample}
\end{eqnarray}
In (\ref{eq:sample}), $P(X^{m[k]}_j | z_j^{[k]} , X_j^{\mathbf{m_o}_j} )$ in the numerator can be easily calculated because all the parent nodes of $X^{m[k]}_j$ are given in the graphical model shown in Fig. \ref{fig:g-mhdp}. However, $P(X^{m[k]}_j|X^{\mathbf{m_o}_j}_j)$ in the denominator cannot be evaluated in a straightforward way.
Again, a Monte Carlo method can be adopted, as follows:
\begin{eqnarray}
P(X^{m[k]}_j | X_j^{\mathbf{m_o}_j}) &=& \sum_{z_j} P(X^{m[k]}_j | z_j , X_j^{\mathbf{m_o}_j})P( z_j | X_j^{\mathbf{m_o}_j})\nonumber \\
&=& \mathbb{E}_{z_j | X_j^{\mathbf{m_o}_j}}[ P(X^{m[k]}_j | z_j  , X_j^{\mathbf{m_o}_j})]\nonumber\\
 &\approx & \frac{1}{K'} \sum_{k'} P(X^{m[k]}_j|  z_j^{[k']} , X_j^{\mathbf{m_o}_j}) \label{eq:sample2}
\end{eqnarray}
where $K'$ is the number of samples for the second Monte Carlo approximation.
Fortunately, in this Monte Carlo approximation \aref{eq:sample2}, we can reuse the samples drawn in the previous Monte Carlo approximation efficiently.
By substituting (\ref{eq:sample2}) for (\ref{eq:sample}), we finally obtain the approximate IG for the criterion of active perception, i.e., our proposed method, as follows:
\begin{equation}
\IG (z_j ;X_j^m | X_j^{\mathbf{m_o}_j} ) \approx  \frac{1}{K} \sum_{k}  \log \frac{P(X^{m[k]}_j|  z_j^{[k]}, X_j^{\mathbf{m_o}_j} )}{ \frac{1}{K} \sum_{k'}    P(X^{m[k]}_j|  z_j^{[k']}, X_j^{\mathbf{m_o}_j} )}.\nonumber
\end{equation}
Note that the computational cost for evaluating $\IG$ becomes $O(K^2)$. 
In summary, a robot can approximately estimate the IG for unobserved modality information by generating virtual observations based on observed data and evaluating their likelihood.

\subsection{Sequential Decision Making as a Submodular Maximization}
If a robot wants to select $L$ actions $\mathbf{A} _j = \{a_1, a_2, \ldots , a_L\}\ (a_i \in \mathbf{M} \setminus \mathbf{m_o}_{j}) $, it has to solve 
\aref{eq:IG_max}, i.e., a combinatorial optimization problem.
The number of combinations of $L$ actions is ${}_{\#(\mathbf{M}\setminus\mathbf{m_o}_j)} C _L$, which increases dramatically when the number of possible actions $\#(\mathbf{M} \setminus \mathbf{m_o}_{j})$ and $L$ increase.
For example, Sinapov et al. (2014) gave a robot 10 different behaviors in their experiment on robotic multimodal categorization\nocite{Sinapov2014}.
Future autonomous robots will have more available actions for interacting with a target object and be able to obtain additional types of modality information through these interactions.
Hence, it is important to develop an efficient solution for the combinatorial optimization problem. 

Here again, the MHDP has advantages for solving this problem.

\begin{theorem}\label{theo:2}
The evaluation criterion for multimodal active perception $\IG (z_j ;X_j^\mathbf{A}  | X_j^{\mathbf{m_o}_j} )$ is a submodular and non-decreasing function with regard to $\mathbf{A} $.
\end{theorem}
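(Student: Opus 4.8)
The plan is to reduce both claims to a single structural property of the MHDP graphical model (Fig.~\ref{fig:g-mhdp}): conditioned on the latent topic variable $z_j=\{\{k_{jt}\}_t,\{t^m_{jn}\}_{m,n}\}$ — together with the quantities that condition every term in the criterion, namely $X_j^{\mathbf{m_o}_j}$ and the training corpus, which stay fixed — the not-yet-observed modality observations $\{X_j^m\}_{m\in\mathbf{M}\setminus\mathbf{m_o}_j}$ are mutually independent. This holds by d-separation: every path joining $X_j^m$ and $X_j^{m'}$ with $m\neq m'$ must pass through a node of $z_j$, and the only parents of an emission $x^m_{jn}$ outside $z_j$ are the modality-specific parameters $\theta^m_k$, whose posteriors factorize over $m$. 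First I would isolate this as a lemma and record its consequence: for any $s\in\mathbf{M}\setminus\mathbf{m_o}_j$ and any $\mathbf{S}\subseteq\mathbf{M}\setminus(\mathbf{m_o}_j\cup\{s\})$,
\[
H\big(X_j^s \mid z_j,X_j^{\mathbf{S}\cup\mathbf{m_o}_j}\big)=H\big(X_j^s \mid z_j,X_j^{\mathbf{m_o}_j}\big),
\]
so this term is the same for every $\mathbf{S}$.

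Write $f(\mathbf{A}):=\IG(z_j;X_j^{\mathbf{A}}\mid X_j^{\mathbf{m_o}_j})$, so $f(\emptyset)=0$. For the non-decreasing property I would apply the chain rule for mutual information: for $\mathbf{A}\subseteq\mathbf{B}$,
\[
f(\mathbf{B})-f(\mathbf{A})=\IG\big(z_j;X_j^{\mathbf{B}\setminus\mathbf{A}}\mid X_j^{\mathbf{A}\cup\mathbf{m_o}_j}\big)\ge 0,
\]
because a conditional mutual information is non-negative; note this step needs no independence assumption. For submodularity I would show that the marginal gain $f(\mathbf{S}\cup\{s\})-f(\mathbf{S})=\IG(z_j;X_j^s\mid X_j^{\mathbf{S}\cup\mathbf{m_o}_j})$ is non-increasing in $\mathbf{S}$. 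Expanding it as $H(X_j^s\mid X_j^{\mathbf{S}\cup\mathbf{m_o}_j})-H(X_j^s\mid z_j,X_j^{\mathbf{S}\cup\mathbf{m_o}_j})$, the lemma freezes the second term, while ``conditioning reduces entropy'' gives $H(X_j^s\mid X_j^{\mathbf{B}\cup\mathbf{m_o}_j})\le H(X_j^s\mid X_j^{\mathbf{A}\cup\mathbf{m_o}_j})$ whenever $\mathbf{A}\subseteq\mathbf{B}$. Subtracting the now-constant second term yields $\IG(z_j;X_j^s\mid X_j^{\mathbf{B}\cup\mathbf{m_o}_j})\le\IG(z_j;X_j^s\mid X_j^{\mathbf{A}\cup\mathbf{m_o}_j})$, i.e. $f(\mathbf{A}\cup\{s\})-f(\mathbf{A})\ge f(\mathbf{B}\cup\{s\})-f(\mathbf{B})$ for all $\mathbf{A}\subseteq\mathbf{B}$ and $s\notin\mathbf{B}$, which is precisely submodularity.

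The information-theoretic part is standard — essentially the Krause--Guestrin argument that mutual information is submodular under a ``naive-Bayes''-type structure in which the observations are conditionally independent given the target. The step I expect to be the real obstacle is making the conditional-independence lemma rigorous for the MHDP: one has to verify that $z_j$ as defined genuinely contains every latent variable that couples the modalities (both the per-feature table assignments $t^m_{jn}$ and the per-table dish assignments $k_{jt}$), and that carrying along the fixed background $X_j^{\mathbf{m_o}_j}$ and the training data does not re-couple the modalities through the shared hyperparameters $\alpha^m_0,\gamma,\lambda$. This is exactly where the per-modality factorization of the emission parameters $\theta^m_k$ enters, so I would make that factorization explicit before invoking d-separation.
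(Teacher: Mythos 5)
Your proposal is correct and follows essentially the same route as the paper: the paper's proof also rests on the observation that the modality observations $X_j^m$ are conditionally independent given $z_j$ in the MHDP graphical model, and then invokes the theorem of Krause and Guestrin (2005) for the conclusion. The only difference is that you unpack that cited result explicitly (chain rule for the non-decreasing part, and the frozen conditional entropy term plus ``conditioning reduces entropy'' for submodularity), which the paper leaves to the reference.
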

\begin{proof}
As shown in the graphical model of the MHDP in Fig. \ref{fig:g-mhdp}, the observations for each modality $X_j^m$ are conditionally independent under the condition that a set of latent variables $z_j = \{\{k_{jt}\}_{1 \le t \le T_j},$ $ \{t_{jn}^m\}_{m \in \mathbf{M}, 1 \le n \le N^m_j}\}$
is given. This satisfies the conditions of the theorem by Krause et al. (2005)\nocite{Krause05}. Therefore, $\IG (z_j ;X_j^m | X_j^{\mathbf{m_o}_j} )$ is a submodular and non-decreasing function with regard to $X_j^m$.
\end{proof}

Submodularity is a property similar to the convexity of a real-valued function in a vector space.
If a set function $F : V \rightarrow R$ satisfies
\begin{equation}
F(A\cup x)- F(A) \ge F(A' \cup x) - F(A'),\nonumber
\end{equation}
where $V$ is a finite set $\forall A \subset A' \subseteq V$ and $x \notin A$, the set function $F$ has submodularity and is called a submodular function.

Function $\IG$ is not always a submodular function. However,
Krause et al. proved that $\IG(U;A)$ is submodular and non-decreasing with regard to $A\subseteq S$ if all of the elements of $S$ are conditionally independent under the condition that $U$ is given. With this theorem, Krause et al. (2005) solved the sensor allocation problem efficiently\nocite{Krause05}. Theorem \ref{theo:2} means that 
the problem \aref{eq:IG_max} is reduced to a {\em submodular maximization problem}.

It is known that the greedy algorithm is an efficient strategy for the submodular maximization problem.
Nemhauser et al. (1978) proved that the greedy algorithm can select a subset that is at most a constant factor
$(1 - 1/ \mathrm{e})$ worse than the optimal set, if the evaluation function $F(A)$ is submodular, non-decreasing, and $F(\emptyset)=0$, where $F(\cdot)$ is a set function, and $A$ is a set\nocite{Nemhauser78}. If the evaluation function is a submodular set function, a greedy algorithm is practically sufficient for selecting subsets in many cases. In sum, a greedy algorithm gives a near-optimal solution. However, the greedy algorithm is still inefficient because it requires an evaluation of all choices at each step of a sequential decision making process. 

Minoux (1978) proposed a lazy greedy algorithm to makes the greedy algorithm more efficient for the submodular evaluation function\nocite{Minoux78}. The lazy greedy algorithm can reduce the number of evaluations by using the characteristics of a submodular function.

In this paper, we propose the use of the {\it lazy greedy algorithm} for selecting $L$ actions to recognize a target object on the basis of the submodular property of $\IG$.
The final greedy and lazy greedy algorithms for MHDP-based active perception, i.e., our proposed methods, are shown in Algorithms~\ref{alg:greedy} and~\ref{alg:lazy}, respectively.

The main contribution of the lazy greedy algorithm is to reduce the computational cost of active perception. The majority of the computational cost originates from the number of times a robot evaluates $\IG_m$ for determining action sequences. When a robot has to choose $L$ actions, the brute-force algorithm that directly evaluates all alternatives $\mathbf{A} \in \mathbf{F}^{\mathbf{m_o}_j}_L$
using \aref{eq:IG_max} requires ${}_{\#(\mathbf{M}\setminus\mathbf{m_o}_j)} C _L$ evaluations of $\IG (z_j ;X_j^\mathbf{A}  | X_j^{\mathbf{m_o}_j} )$.
In contrast, the greedy algorithm requires $\{{\#(\mathbf{M}\setminus\mathbf{m_o}_j)} +  ({\#(\mathbf{M}\setminus\mathbf{m_o}_j)} -1) + \ldots + ({\#(\mathbf{M}\setminus\mathbf{m_o}_j)} -L +1)\}$ evaluations of $\IG (z_j ;X_j^m | X_j^{\mathbf{m_o}_j} )$, i.e., $O(ML)$. The lazy greedy algorithm incurs the same computational cost as the greedy algorithm only in the worst case. However, practically, the number of re-evaluations in the lazy greedy algorithm is quite small. Therefore, the computational cost of the lazy greedy algorithm increases almost in proportion to $L$, i.e., almost linearly. 
The memory requirement of the proposed method is also quite small. Both the greedy and lazy greedy algorithms only require memory for $\IG_m$ for each modality and $K$ samples for the Monte Carlo approximation. These requirements are negligibly small compared with the MHDP itself.

\begin{algorithm}[tb!p]
\caption{Greedy algorithm.}
\label{alg:greedy}
\begin{algorithmic}
\REQUIRE MHDP is trained using a training data set.
\STATE The $j$-th object is found.
\STATE $\mathbf{m_o}_j$ is initialized, and $X_j^{\mathbf{m_o}_j}$ is observed.
\FOR{$l=1$ to $L$}
\FORALL{$m \in \mathbf{M} \setminus \mathbf{m_o}_{j}$}
\FOR{$k=1$ to $K$}
\STATE Draw \[(z^{[k]}_j, X^{m[k]}_j) \sim P(z_j, X^m_j|X_j^{\mathbf{m_o}_j} )\]
\ENDFOR
\STATE \[\mathrm{IG}_m  \leftarrow \frac{1}{K} \sum_{k}  \log \frac{P(X^{m[k]}_j|  z_j^{[k]}, X_j^{\mathbf{m_o}_j})}{ \frac{1}{K} \sum_{k'}    P(X^{m[k]}_j|  z_j^{[k']}, X_j^{\mathbf{m_o}_j})} \]
\ENDFOR
\STATE \[ m^* \leftarrow \argmax_{m} \mathrm{IG}_m \]
\STATE Execute the $m^*$-th action to the $j$-th target object and obtain $X_j^{m^*}$.
\STATE $\mathbf{m_o}_j \leftarrow \mathbf{m_o}_j \cup \{m^*\} $
\ENDFOR
\end{algorithmic}
\end{algorithm}

\begin{algorithm}[tb!p]
\caption{Lazy greedy algorithm.}
\label{alg:lazy}
\begin{algorithmic}
\REQUIRE The MHDP is trained using a training data set.
\STATE The $j$-th object is found.
\STATE $\mathbf{m_o}_j$ is initialized, and $X_j^{\mathbf{m_o}_j}$ is observed.
\FORALL{$m \in \mathbf{M} \setminus \mathbf{m_o}_{j}$}
\FOR{$k=1$ to $K$}
\STATE Draw \[(z^{[k]}_j, X^{m[k]}_j) \sim P(z_j, X^m_j|X_j^{\mathbf{m_o}_j} )\]
\ENDFOR
\STATE \[\mathrm{IG}_m  \leftarrow  \frac{1}{K} \sum_{k}  \log \frac{P(X^{m[k]}_j|  z_j^{[k]}, X_j^{\mathbf{m_o}_j})}{ \frac{1}{K} \sum_{k'}    P(X^{m[k]}_j|  z_j^{[k']}, X_j^{\mathbf{m_o}_j})}\]
\ENDFOR
\STATE \[ m^* \leftarrow \argmax_{m} \mathrm{IG}_m \]
\STATE Execute the $m^*$-th action to the $j$-th target object and obtain $X_j^{m^*}$.
\STATE $\mathbf{m_o}_j \leftarrow \mathbf{m_o}_j \cup \{m^*\} $
\STATE Prepare a stack $S$ for the modality indices and initialize it.
\FORALL{$m \in \mathbf{M} \setminus \mathbf{m_o}_{j}$}
\STATE $push(S,(m,\mathrm{IG}_m))$
\ENDFOR
\FOR{$l=1$ to $L-1$}
\REPEAT
\STATE $S \leftarrow descending\_sort (S)$\ // w.r.t. $\mathrm{IG}_m$
\STATE $(m^{1}, \mathrm{IG}_{m^{1}} )\leftarrow pop(S)$ , $(m^{2}, \mathrm{IG}_{m^{2}} )\leftarrow pop(S)$
\STATE // Re-evaluate $\mathrm{IG}_{m^{1}} $ as follows.
\FOR{$k=1$ to $K$}
\STATE Draw \[(z^{[k]}_j, X^{m^{1}[k]}_j) \sim P(z_j, X^{m^{1}}_j|X_j^{\mathbf{m_o}_j} )\]
\ENDFOR
\STATE \[\mathrm{IG}_{m^{1}}  \leftarrow \frac{1}{K} \sum_{k}  \log \frac{P(X^{m^1[k]}_j|  z_j^{[k]}, X_j^{\mathbf{m_o}_j})}{ \frac{1}{K} \sum_{k'}    P(X^{m^1[k]}_j|  z_j^{[k']}, X_j^{\mathbf{m_o}_j})}\]
\STATE $push(S,(m^{2},\mathrm{IG}_{m^{2}}))$, $push(S,(m^{1},\mathrm{IG}_{m^{1}}))$
\UNTIL{$\mathrm{IG}_{m^{1}} \ge \mathrm{IG}_{m^{2}}$}
\STATE $m^* \leftarrow m^{1}  $
\STATE $pop(S)$
\STATE Execute the $m^*$-th action to the $j$-th target object and obtain $X_j^{m^*}$.
\STATE $\mathbf{m_o}_j \leftarrow \mathbf{m_o}_j \cup \{m^*\} $
\ENDFOR
\end{algorithmic}
\end{algorithm}

\section{Experiment 1: Humanoid Robot}\label{sec5}
An experiment using an upper-torso humanoid robot was conducted to verify the proposed active perception method in the real-world environment. 

\subsection{Conditions}
In this experiment, RIC-Torso, developed by the RT Corporation, was used (see Fig.~\ref{fig:robot}).
RIC-Torso is an upper-torso humanoid robot that has two robot hands.
\begin{figure}[tb!p]
\centering
\includegraphics[width=70mm]{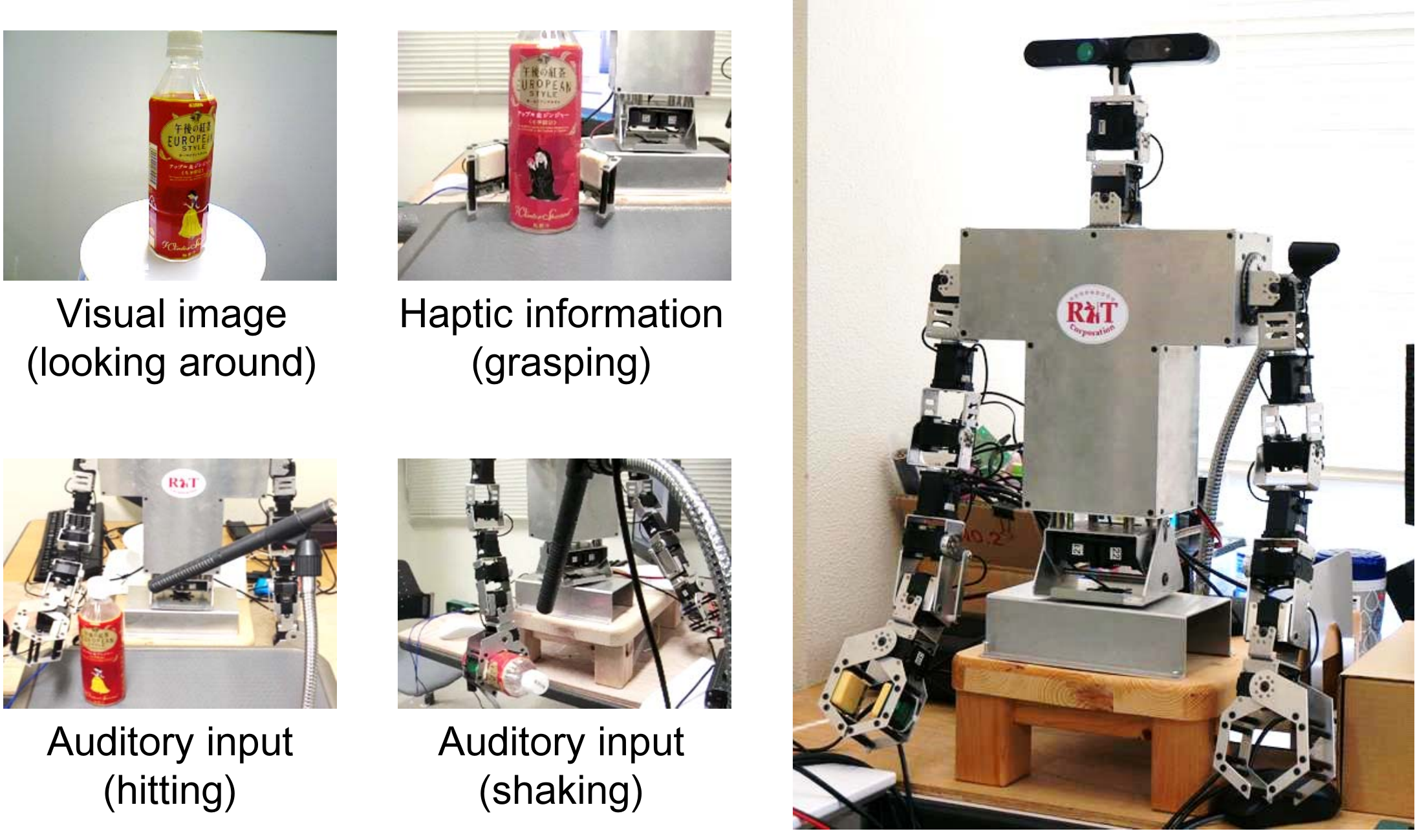}
\caption{Robot used in the experiment.}
\label{fig:robot}
\end{figure}
We prepared an experimental environment that is similar to the one in the original MHDP paper~\cite{MHDP}. 

\subsubsection{Visual information ($m^v$)}
Visual information was obtained from the Xtion PRO LIVE set on the head of the robot. The camera was regarded as the eyes of the robot.
The robot captured 74 images of a target object while it rotated on a turntable (see Fig.~\ref{fig:robot}). The size of each image was re-sized to $320 \times 240$.
Scale-invariant feature transform (SIFT) feature vectors were extracted from each captured image~\cite{SIFT}. A certain number of $128$-dimensional feature vectors were obtained from each image. {Note that the SIFT feature did not consider hue information. }
All of the obtained feature vectors were transformed into BoF representations using k-means clustering. BoF representations were used as observation data for the visual modality of the MHDP. The index for this modality was defined as $m^v$.

\subsubsection{Auditory information ($m^{as}$ and $m^{ah}$)}
Auditory information was obtained from a multipowered shotgun microphone NTG-2 by RODE Microphone. The microphone was regarded as the ear of the robot.
In this experiment, two types of auditory information were acquired. One was generated by hitting the object, and the other was generated by shaking it. The two sounds were regarded as different auditory information and hence different modality observations in the MHDP model.
The two actions, i.e., hitting and shaking, were manually programmed for the robot. When the robot began to execute an action, it also started recording the objects's sound (see Fig.~\ref{fig:robot}). The sound was recorded until two seconds after the robot finished the action. 
The recorded auditory data were temporally divided into frames, and each frame was transformed into $13$-dimensional Mel-frequency cepstral coefficients (MFCCs).  The MFCC feature vectors were transformed into BoF representations using k-means clustering in the same way as the visual information. The indices of these modalities were defined as $m^{as}$ and $m^{ah}$, respectively, for ``shake'' and ``hit.''

\subsubsection{Haptic information ($m^h$)}
Haptic information was obtained by grasping a target object using the robot's hand. 
When the robot attempted to obtain haptic information from an object placed in front of it, it moved its hand to the object and gradually closed its hand until a certain amount of counterforce was detected (see Fig.~\ref{fig:robot}).
The joint angle of the hand was measured when the hand touched the target object and when the hand stopped. The two variables and difference between the two angles were used as a three-dimensional feature vector. When obtaining haptic information, the robot grasped the target object $10$ times and obtained $10$ feature vectors. The feature vectors were transformed into BoF representations using k-means clustering in the same way as for the other information types. The index of the haptic modality was defined as $m^h$.

\subsubsection{Multimodal information as BoF representations}
In summary, a robot could obtain multimodal information from four modalities for perception. The set of modalities was $\mathbf{M} = \{m^v, m^{as}, m^{ah}, m^h \}$.
The dimensions of the BoFs were set to $25$, $25$, $25$, and $5$ for $m^v$, $m^{as}$, $m^{ah}$, and $m^h$, respectively. The dimension of each BoF corresponds to the number of clusters for k-means clustering.
The numbers of clusters, i.e., the sizes of the dictionaries, were empirically determined on the basis of a preliminary experiment on multimodal categorization. All of the training datasets were used to train the dictionaries.
The histograms of the feature vectors, i.e., the BoFs, were resampled to make their counts $N_j^{m^v}=100, N_j^{m^{as}}=80, N_j^{m^{ah}}=130$, and $N_j^{m^{h}}=30$. 
The weight of each modality $w^m$ was set to $1$. The formation of multimodal object categories itself is out of the scope of this paper. Therefore, the constants were empirically determined so that the robot could form object categories that are similar to human participants.
The number of samples $K$ in the Monte Carlo approximation for estimating IG was set to $K = 5000$.

\subsubsection{Target objects}
For the target objects, $17$ types of commodities were prepared for the experiment shown in Fig. \ref{fig:result}.
Each index on the right-hand side of the figure indicates the index of each object. The hardness of the balls, the striking sounds of the cups, and the sounds made while shaking the bottles were different depending on the object categories. Therefore, ground-truth categorization could not be achieved using visual information alone.

\subsection{Procedure}
The experimental procedure was as follows.
First, the robot formed object categories through multimodal categorization in an unsupervised manner.
An experimenter placed each object in front of the robot one by one.
The robot looked at the object to obtain visual features, grasped it to obtain haptic features, shook it to obtain auditory shaking features, and hit it to obtain the auditory striking features.
After obtaining the multimodal information of the objects as a training data set, the MHDP was trained using a Gibbs sampler. The results of multimodal categorization are shown in Fig.~\ref{fig:result}. The category that has the highest posterior probability for each object is shown in white.
These results show that the robot can form multimodal object categories using MHDP, as described in \cite{MHDP}.
After the robot had formed object categories, we fixed the latent variables for the training data set.

\begin{figure}[tb!p]
\begin{center}
\begin{tabular}{cc}
\begin{minipage}{0.49\hsize}
\centering
\includegraphics[width=80mm]{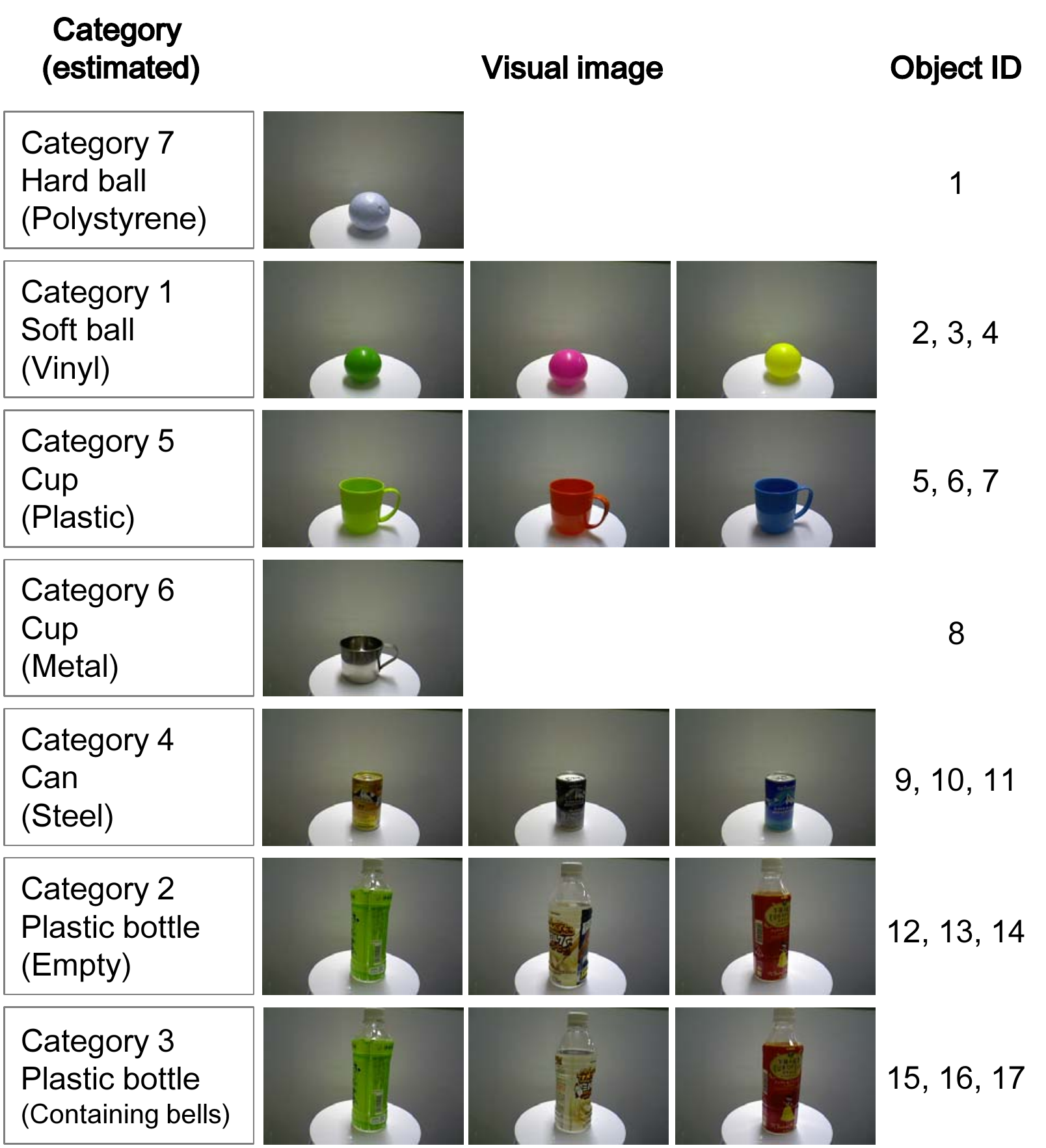}
\end{minipage}
\begin{minipage}{0.49\hsize}
\centering
\includegraphics[width=60mm]{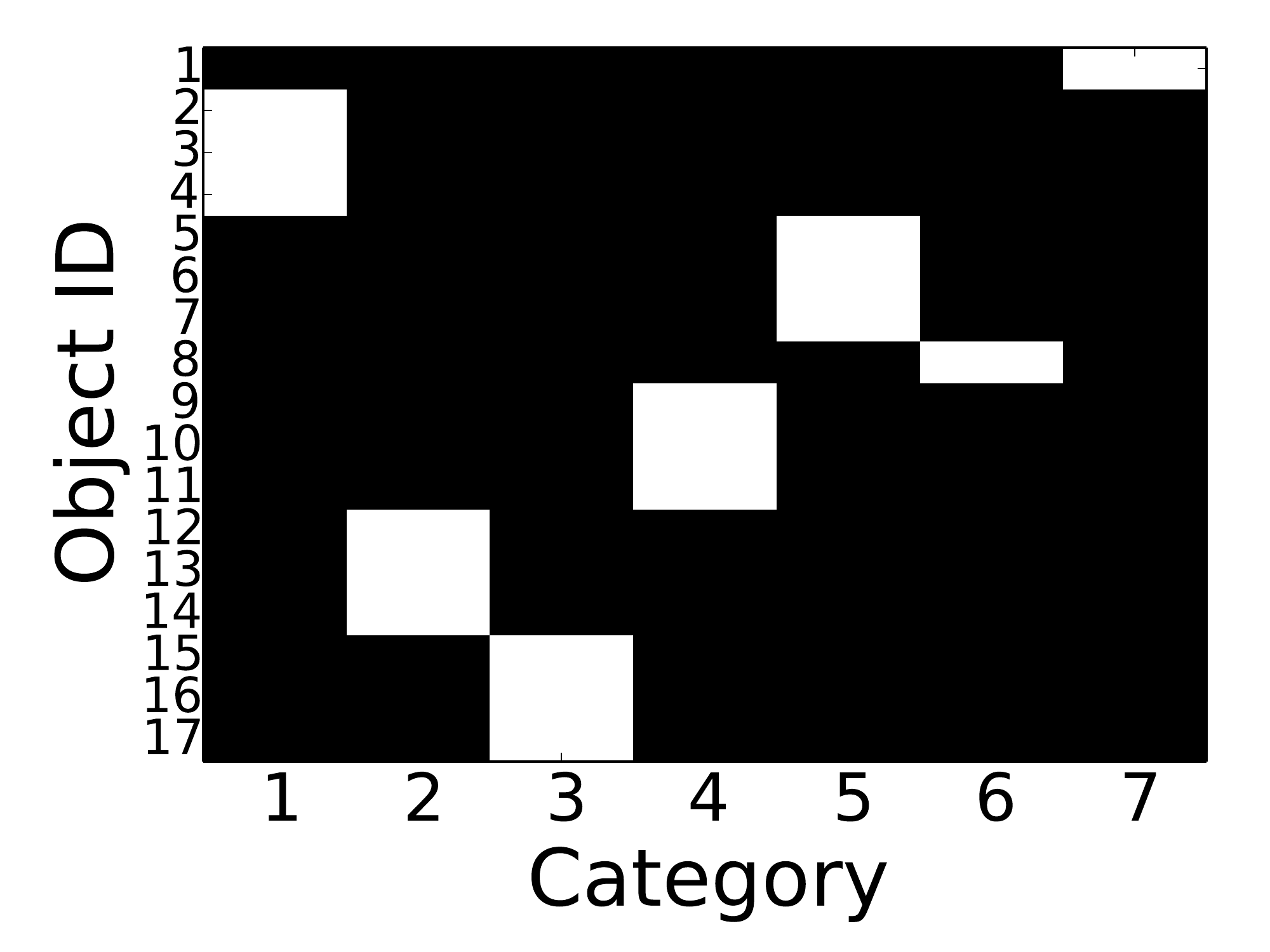}
\end{minipage}
\end{tabular}
\caption{(Left) target objects used in the experiment and (right) categorization results obtained in the experiment.}
\label{fig:result}
\end{center}
\end{figure}

Second, an experimental procedure for active perception was conducted.
An experimenter placed an object in front of the robot. The robot observed the object using its camera, obtained visual information, and set $\mathbf{m_o}_j =\{ m^v \}$. The robot then determined its next set of actions for recognizing the target object using its active perception strategy. 

\begin{figure*}[tb!p]
\includegraphics[width=\linewidth]{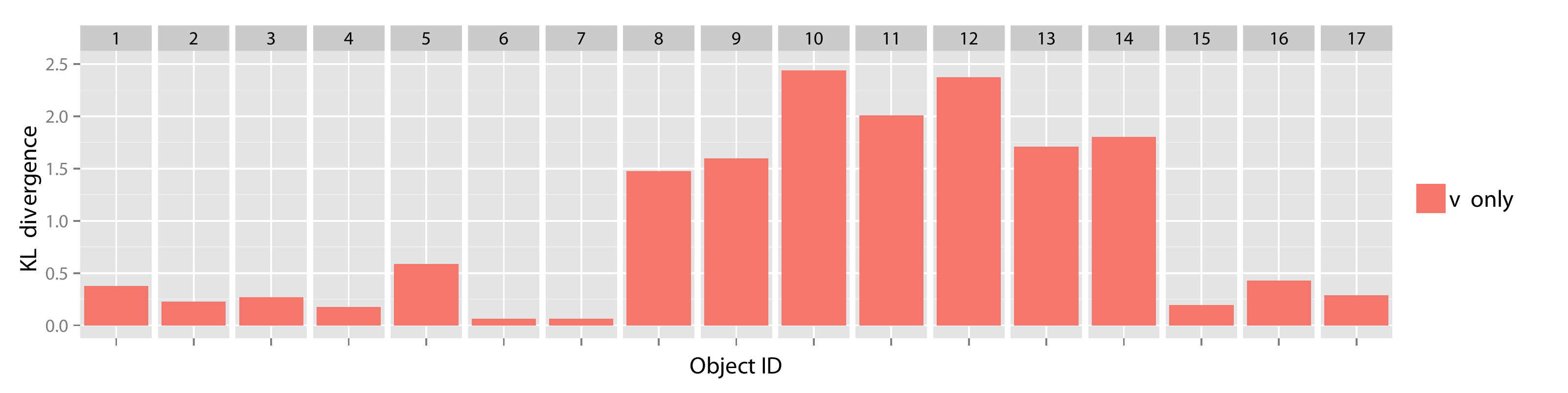}
\includegraphics[width=0.98\linewidth]{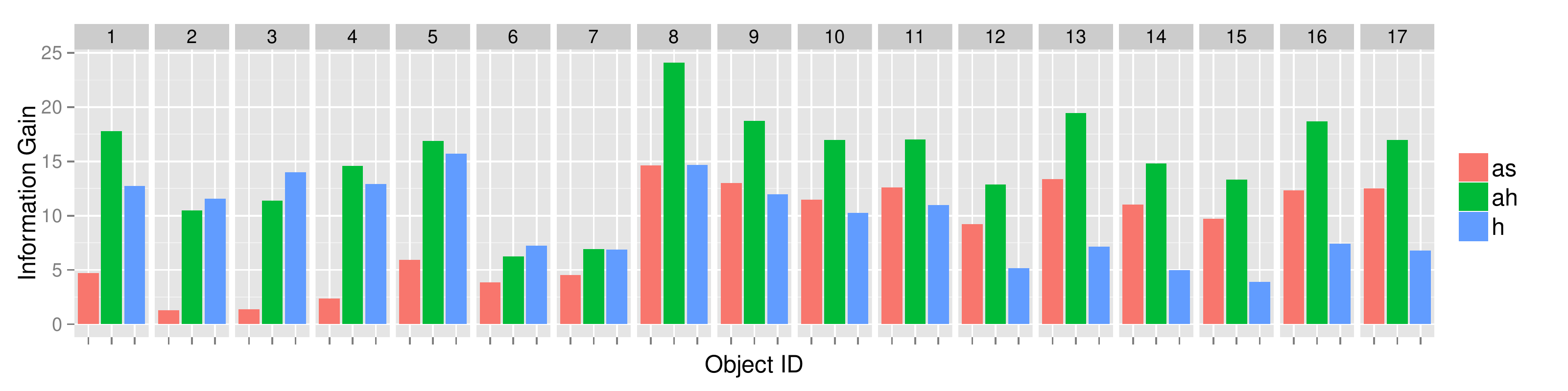}
\includegraphics[width=0.99\linewidth]{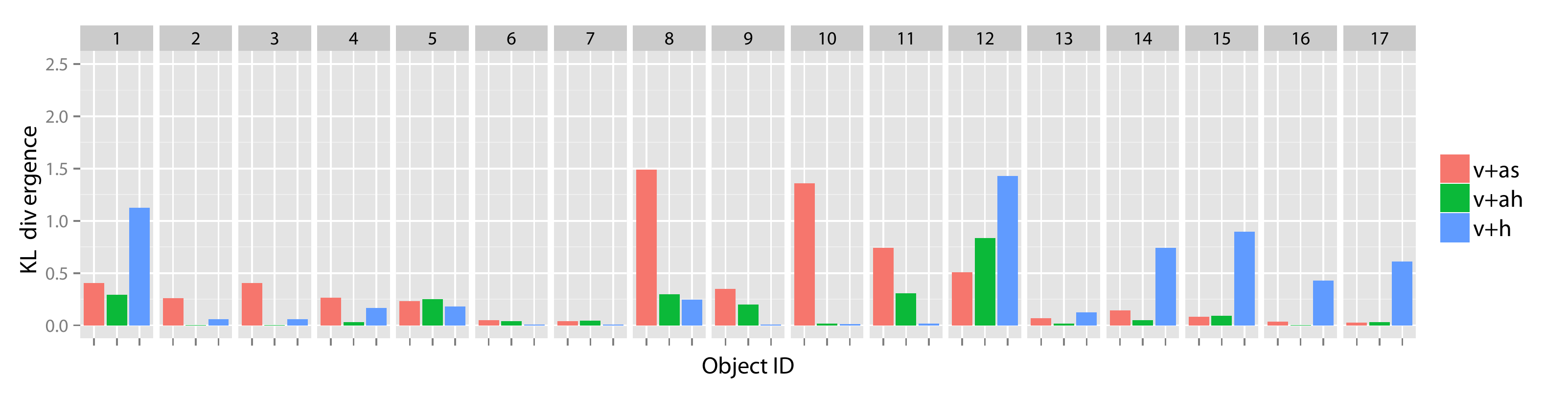}
\caption{ (Top) KL divergence between the final recognition state and the posterior probability estimated after obtaining only visual information, (middle) estimated $\IG_m$ for each object based on visual information, and (bottom) KL divergence between the final recognition state and the posterior probability estimated after obtaining only visual information and each selected action. Our theory of multimodal active perception suggests that the action with the highest information gain (shown in the middle) tends to lead its initial recognition state (whose KL divergence from the final recognition state is shown at the top) to a recognition state whose KL divergence from the final recognition state (shown at the bottom) is the smallest. These figures suggest the probabilistic relationships were satisfied as a whole.  }
\label{fig:first_kl_ig}
\end{figure*}

\subsection{Results}
\subsubsection{Selecting the next action}
First, we describe results for the first single action selection after obtaining visual information.
In this experiment, the robot had three choices for its next action, i.e., $m^{as}$, $m^{ah}$, and $m^h$.
To evaluate the results of active perception, we used $\KL \big(  P(k|X_j^{\mathbf{M}}), P(k|X_j^{\mathbf{A} \cup \mathbf{m_o}_j  }) \big)${, i.e., the distance between the posterior distribution over the object categories $k$ in the final recognition state and that in the next recognition state} as an evaluation criterion on behalf of $ \KL \big( P( z_j | X_j^{\mathbf{M}} ) , P( z_j | X_j^{\mathbf{A} \cup \mathbf{m_o}_j} ) \big)$. This is the original evaluation criterion in \aref{eq:goal_KL_det} because the computational cost for evaluating $\KL \big( P( z_j | X_j^{\mathbf{M}} ) , P( z_j | X_j^{\mathbf{A} \cup \mathbf{m_o}_j} ) \big)$ is too high to calculate.

Fig.~\ref{fig:first_kl_ig} (top) shows the KL divergence between the posterior probabilities of the category after obtaining the information from all modalities and after obtaining only visual information.
With regard to some objects, e.g., objects 6 and 7, the figure shows that visual information is sufficient for the robot to recognize the objects. However, with regard to many objects, visual information alone could not lead the recognition state to the final state. However, it could be  reached using the information of all modalities.
Fig.~\ref{fig:first_kl_ig} (middle) shows $\IG_m$ calculated using the visual information for each action. Fig.~\ref{fig:first_kl_ig} (bottom) shows the KL divergence between the final recognition state and the posterior probability estimated after obtaining visual information and the information of each selected action. We observe that an action with a higher value of $\IG_m$ tended to further reduce the KL divergence, as Theorem~\ref{theo:1} suggests.
Fig.~\ref{fig:KL-sorted}
shows the average KL divergence for the final recognition state after executing an action selected by the $\IG_m$ criterion.
Actions $\IG.\text{min}$, $\IG.\text{mid}$, and $\IG.\text{max}$ denote actions that have the minimum, middle, and maximum values of $\IG_m$, respectively. These results show that $\IG.\text{max}$ clearly reduced the uncertainty of the target objects. 

The precision of category recognition after an action execution is summarized in Table~\ref{tbl:precision}. Basically, a category recognition result is obtained as the posterior distribution (\ref{eq:p}) in the MHDP. The category with the highest posterior probability is considered to be the recognition result for illustrative purposes in Table~\ref{tbl:precision}.
Obtaining information by executing $\IG.\text{max}$ almost always increased recognition performance.

\begin{table}[bt]
\caption{Number of Successfully Recognized Objects}
\label{tbl:precision}
\begin{center}
\begin{tabular}{|c||c|c|c||c|}
\hline
v only & v+IG.min & v+IG.mid & v+IG.max & Full information\\
\hline
$8/17$ & $11/17$ & $15/17$ & $ {\mathbf 16/17} $ & $17/17$ \\
\hline
\end{tabular}
\end{center}
\end{table}

\begin{figure}[tb!p]
\begin{center}
\includegraphics[width=0.6\linewidth]{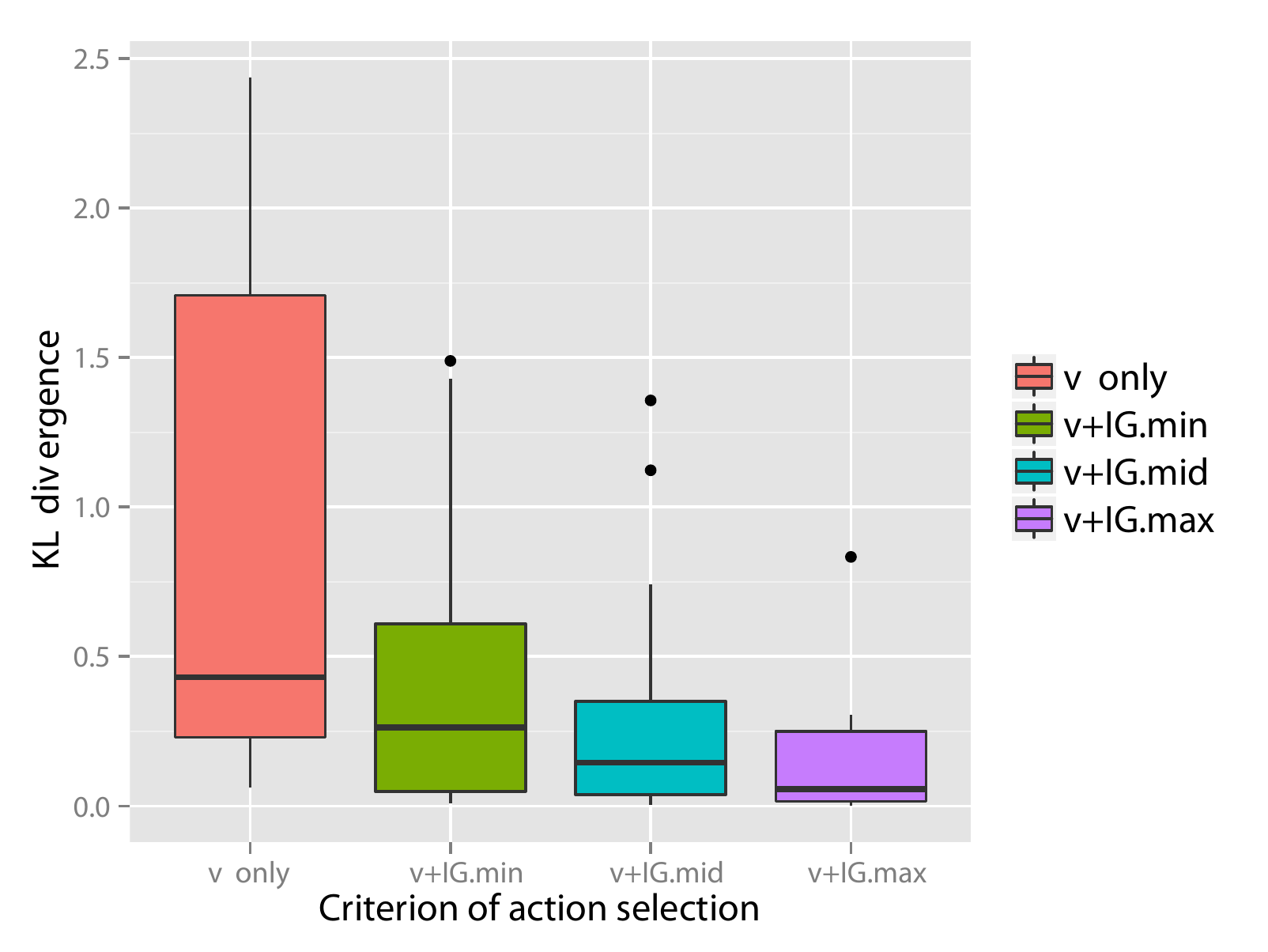}
\caption{Reduction in the KL divergence by executing an action selected on the basis of the $\IG_m$ maximization criterion. The KL divergences between the recognition state after executing the second action and the final recognition state are calculated for all objects and shown with box plot. This shows that an action with more information brings the recognition of its state closer to the final recognition state.}
\label{fig:KL-sorted}
\end{center}
\end{figure}

Examples of changes in the posterior distribution are shown in Figs.~\ref{fig:actionresult8} and \ref{fig:actionresult12} for objects 8 (``metal cup'') and 12 (``plastic bottle containing bells''), respectively.
The robot could not clearly recognize the category of object 8 after obtaining visual information. Action $\IG_m$ in Fig.~\ref{fig:first_kl_ig} shows that $m^{ah}$ was $\IG.\text{max}$ for the 8th object. Fig.~\ref{fig:actionresult8} shows that $m^{ah}$ reduced the uncertainty and allowed the robot to correctly recognize the object, as evidenced by category 6, a metal cup.
This means that the robot noticed that the target object was a metal cup by hitting it and listening to its metallic sound.
The metal cup did not make a sound when the robot shook it. Therefore, the $\IG$ for $m^{as}$ was small. 
As Fig.~\ref{fig:actionresult12} shows, the robot first recognized the 12th object as a plastic bottle containing bells with high probability and as an empty plastic bottle with a low probability. Fig.~\ref{fig:first_kl_ig} shows that the $\IG_m$ criterion suggested $m^{ah}$ as the first alternative and $m^{as}$ as the second alternative.  Fig.~\ref{fig:actionresult12} shows that $m^{as}$ and $m^{ah}$ could determine that the target object was an empty plastic bottle, but $m^h$ could not.

As humans, we would expect to differentiate an empty bottle from a bottle containing bells by shaking or hitting the bottle, and differentiate a metal cup from a plastic cup by hitting it.
The proposed active perception method constructively reproduced this behavior in a robotic system using an unsupervised multimodal machine learning approach.

\begin{figure}[tb!p]
\begin{center}
\begin{tabular}{cc}
\begin{minipage}{0.49\hsize}
\centering
\includegraphics[width=\linewidth]{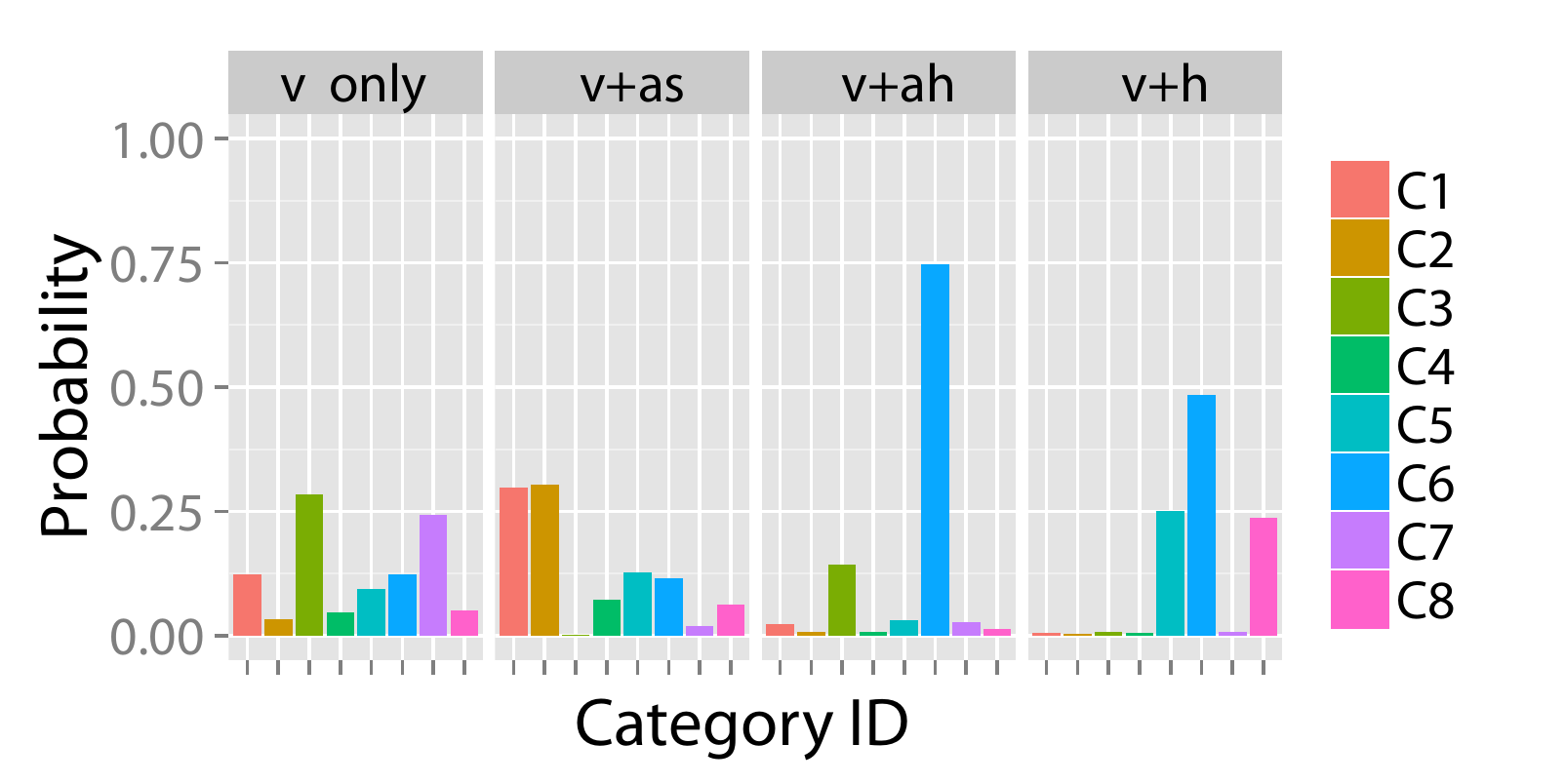}
\caption{Posterior probability of the category for object 8 after executing each action. {These results show that the action with the highest information gain, i.e., $ah$, allowed the robot to efficiently estimate that the true object category was ``metal cup.''}}\label{fig:actionresult8}
\end{minipage}
\begin{minipage}{0.49\hsize}
\centering
\includegraphics[width=\linewidth]{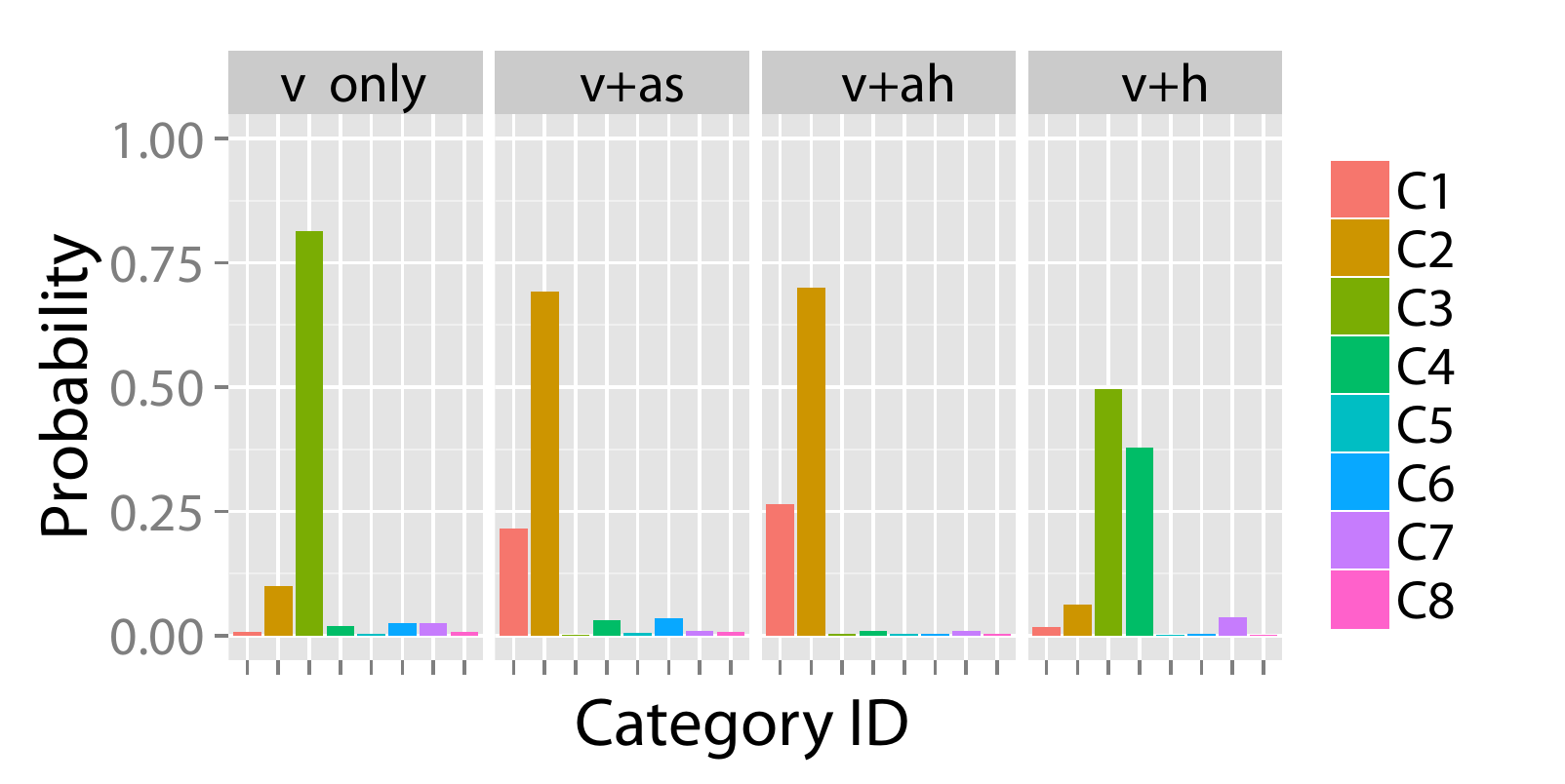}
\caption{Posterior probability of the category for object 12 after executing each action. {These results show that the actions with the highest and second highest information gain, i.e., $ah$ and $as$, allowed the robot to efficiently estimate that the true object category was ``plastic bottle containing bells.''}}\label{fig:actionresult12}
\end{minipage}
\end{tabular}
\end{center}
\end{figure}

\subsubsection{Selecting the next set of multiple actions}
We evaluated the greedy and lazy greedy algorithms for active perception sequential decision making. The KL divergence from the final state for all target objects is averaged at each step and shown in Fig.~\ref{fig:sequence}. 
For each condition, the KL divergence gradually decreased and reached almost zero. However, the rate of decrease notably differed. As the theory of submodular optimization suggests, the greedy algorithm was shown to be a better solution on average and slightly worse than the best case~\cite{Nemhauser78}. The best and worst cases were selected after all types of sequential actions had been performed. 
The ``average'' is the average of the KL divergence obtained by all possible types of sequential actions.  
The results for the lazy greedy algorithm were almost the same as those of the greedy algorithm, as Minoux et al. (1978) suggested\nocite{Minoux78}.

\begin{figure}[tb!p]
\begin{center}
\includegraphics[width=0.7\linewidth]{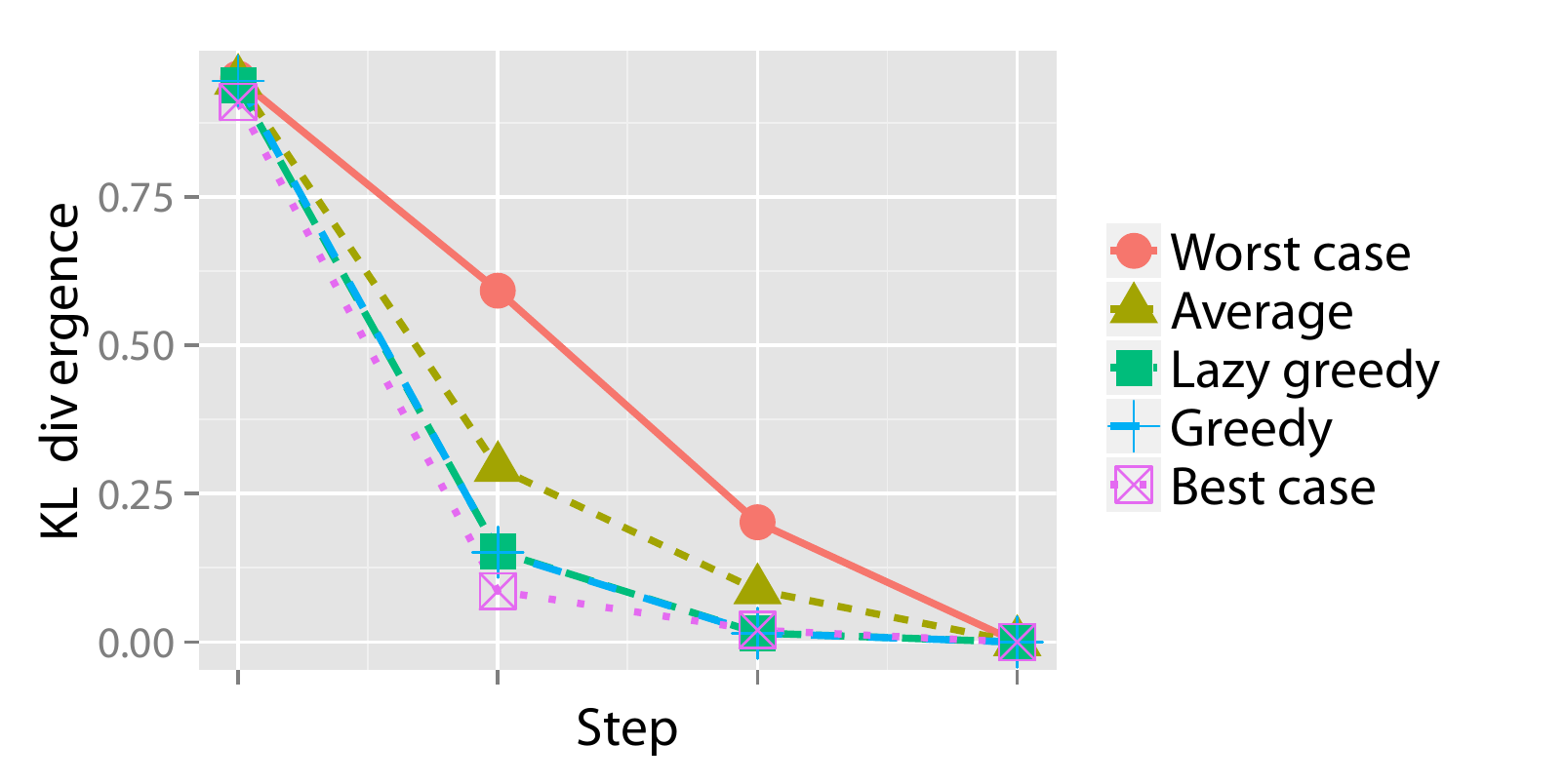}
\caption{KL divergence from the final state at each step for each sequential action selection procedure. Note that the line of the lazy greedy algorithm is overlapped by that of the greedy algorithm.}
\label{fig:sequence}
\end{center}
\end{figure}
	
The sequential behaviors of $\IG_m$ were observed to determine if their behaviors were consistent with our theories. 
For example, the changes in $\IG_m$ at each step as the robot sequentially selected its action to perform on object 10 using the greedy algorithm is shown in Fig.~\ref{fig:IG10}.
Theorem~\ref{theo:2} shows that the IG is a submodular function. This predicts that $\IG_m$ decreases monotonically when a new action is executed in active perception. When the robot obtained only visual information (v only in Fig.~\ref{fig:IG10}), all values of $\IG_m$ were still large. After $m^{ah}$ was executed on the basis of the greedy algorithm, $\IG_{m^{ah}}$ became zero. At the same time, $\IG_{m^{as}}$ and $\IG_{m^{h}}$ decreased. In the same way, all values of $\IG_{m}$ gradually decreased monotonically.

\begin{figure}[tb!p]
\begin{center}
\begin{tabular}{cc}
\begin{minipage}{0.49\hsize}
\centering
\includegraphics[width=\linewidth]{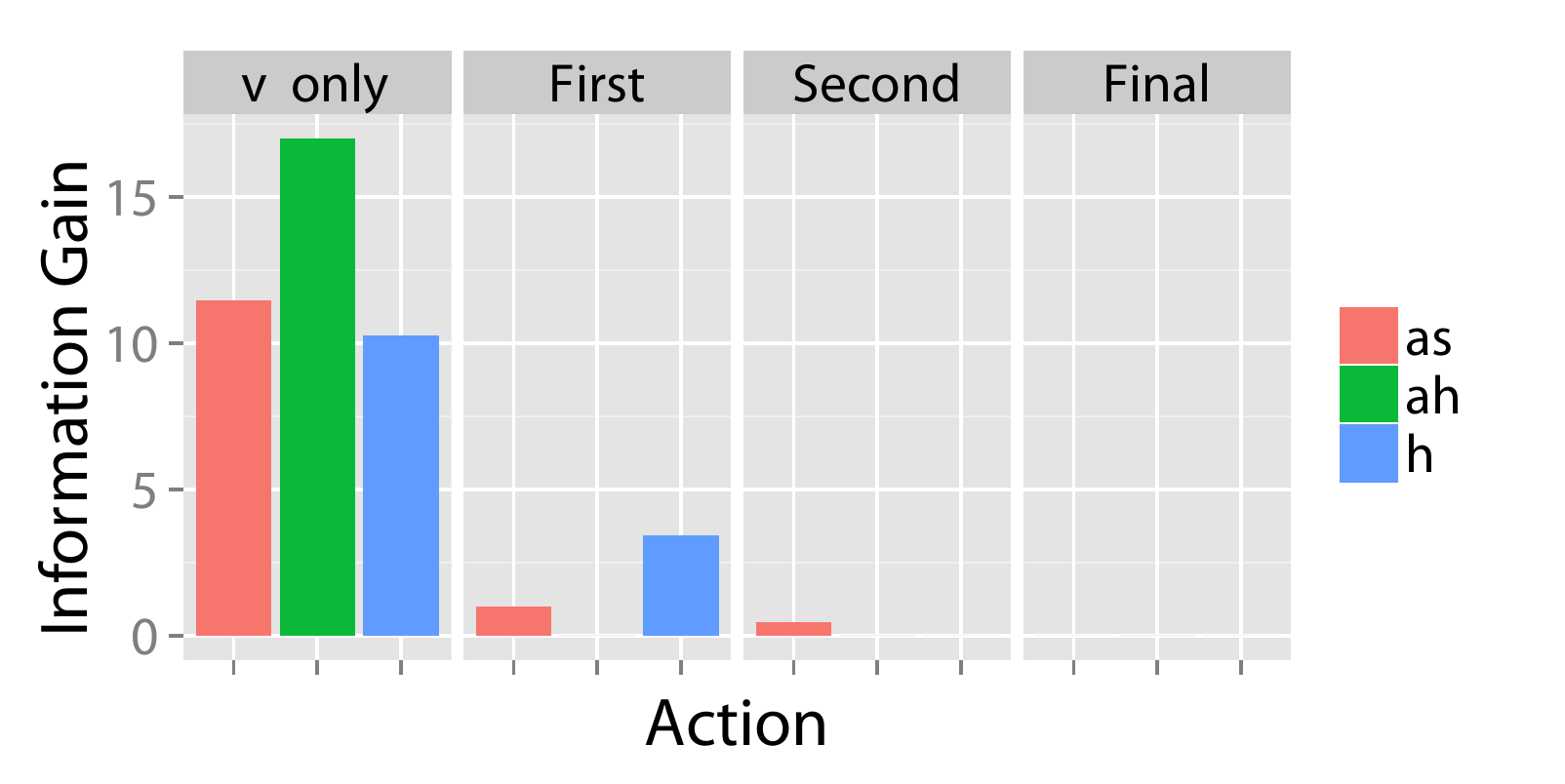}
\caption{$\IG_m$ at each step for object 10 when the greedy algorithm is used.}
\label{fig:IG10}
\end{minipage}
\begin{minipage}{0.49\hsize}
\centering
\includegraphics[width=\linewidth]{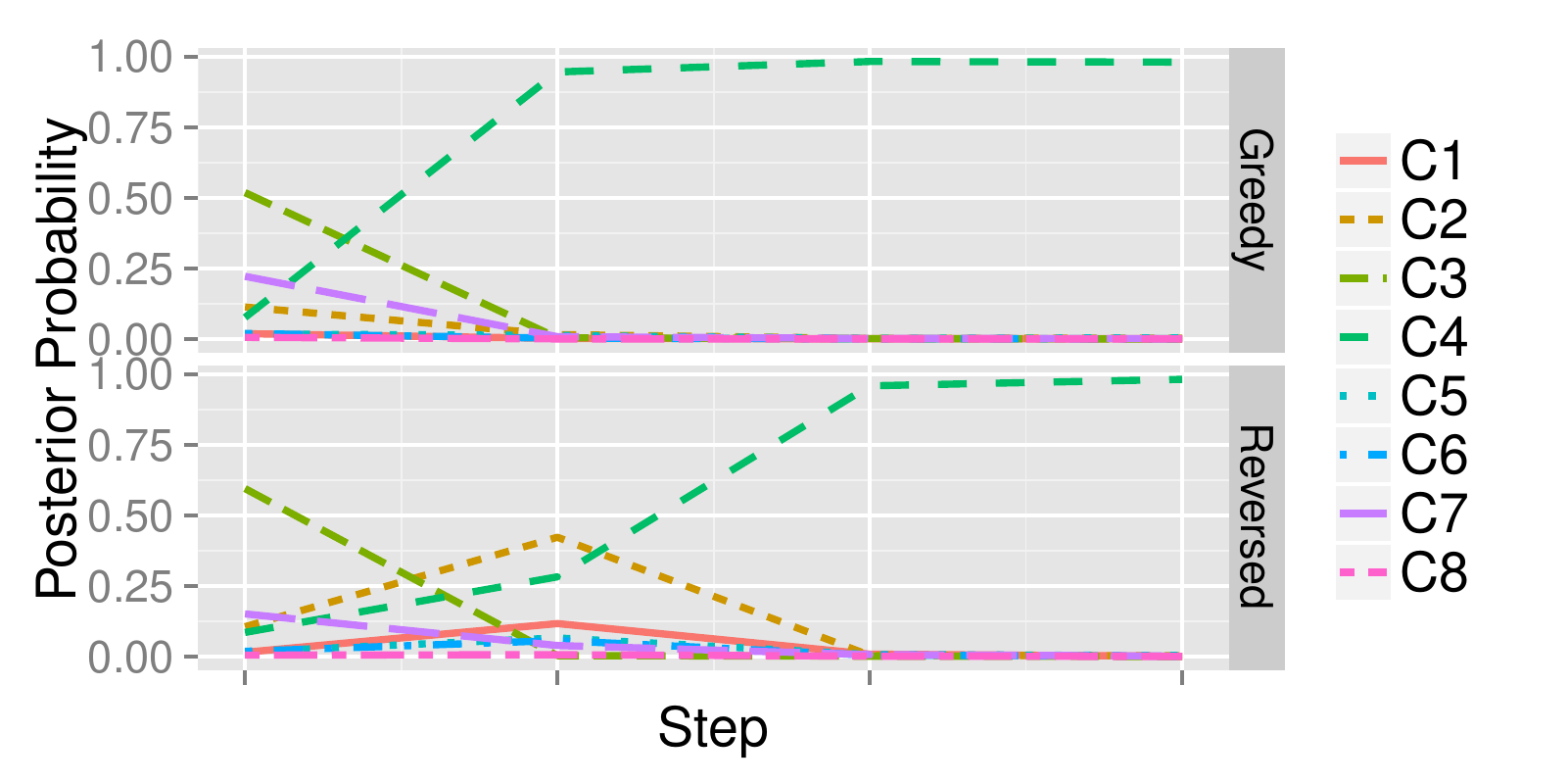}
\caption{Time series of the posterior probability of the category for object 10 during sequential action selection based on (top) the greedy algorithm, i.e., $m^{ah} \rightarrow m^{h} \rightarrow m^{as}$, and (bottom) its reverse order , i.e., $m^{as} \rightarrow m^{h} \rightarrow m^{ah}$. }
\label{fig:posterior10}
\end{minipage}
\end{tabular}
\end{center}
\end{figure}

Fig.~\ref{fig:posterior10} shows the time series of the posterior probability of the category for object 10 during sequential active perception.  Using only visual information, the robot misclassified the target object as a plastic bottle containing bells (category 3).
The action sequence in reverse order did not allow the robot to recognize the object as a steel can at its first step and change its recognition state to an empty plastic bottle (category 4). After the second action, i.e., grasping ($m^h$), the robot recognized the object as a steel can.
In contrast, the greedy algorithm could determine that the target object was in category 4, i.e., steel can, with its first action.

The effect of the number of samples $K$ for the Monte Carlo approximation was observed. Fig.~\ref{fig:IG_sd} shows the relation between $K$ and the standard deviation of the estimated $\IG_m$ for the 15th object for each action after obtaining a visual image. This figure shows that estimation error gradually decreases when $K$ increases. Roughly speaking, $K \ge 1000$ seems to be required for an appropriate estimate of $\IG_m$ in our experimental setting. Evaluation of $\IG_m$ required less than 1 second, which is far shorter than the time required for action execution by a robot. This means that our method can be used in a real-time manner.
\begin{figure}[tb!p]
\begin{center}
\includegraphics[width=0.49\linewidth]{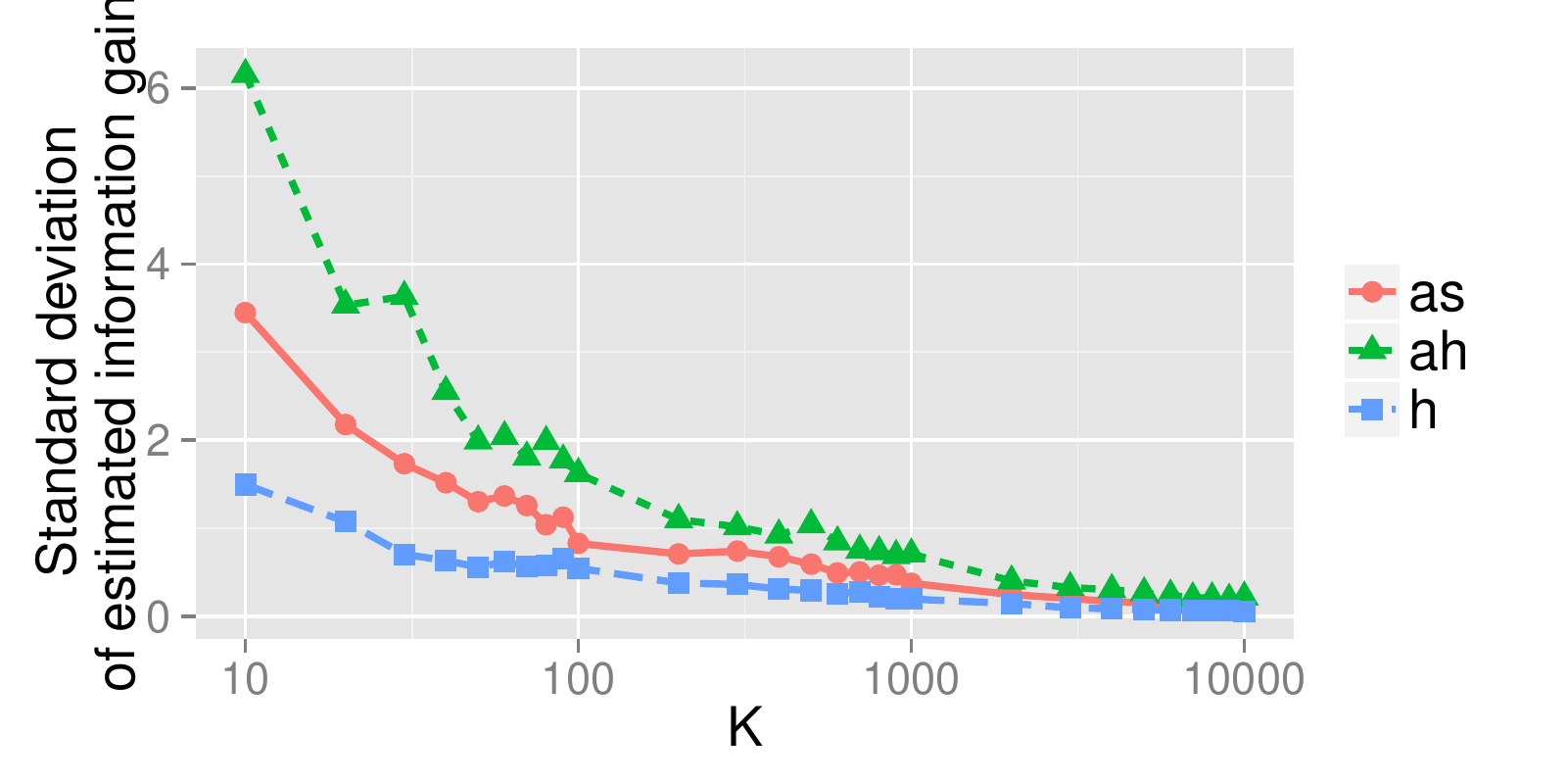}
\caption{Standard deviation of the estimated information gain $\IG_m$ for the 15th object. For each $K$, 100 values of the estimated information gain $\IG_m$ were obtained, and their standard deviation is shown.}
\label{fig:IG_sd}
\end{center}
\end{figure}

These empirical results show that the proposed method for active perception allowed a robot to select appropriate actions sequentially to recognize an object  in the real-world environment and in a real-time manner. 
It was shown that the theoretical results were supported, even in the real-world environment.

\section{Experiment 2: Synthetic Data}\label{sec6}
In experiment 1, the numbers of classes, actions, and modalities as well as the size of dataset were limited. In addition, it was difficult to control the experimental settings so as to check some interesting theoretical properties of our proposed method. 
Therefore, we performed a supplemental experiment, Experiment 2, using synthetic data comprising 21 object types, 63 objects, and 20 actions, i.e., modalities. 

First, we checked the validity of our active perception method when the number of types of actions increases. 
Second, we checked how the method worked when two classes were assigned to the same object. Although the MHDP can categorize an object into two or more categories in a probabilistic manner, each object was classified into a single category in the previous experiment. 

\subsection{Conditions}
A synthetic dataset was generated using the generative model that the MHDP assumes (see Fig.~\ref{fig:g-mhdp}).
We prepared 21 virtual object classes, and three objects were generated from each object class, i.e., we obtained 63 objects in total. Among the object classes, 14 object classes are ``pure,'' and seven object classes are ``mixed.''  For each pure object class, a multinomial distribution was drawn from the Dirichlet distribution corresponding to each modality.
We set the number of modalities $M=20$.
The hyperparameters of the Dirichlet distributions of the modalities were set to $\alpha_0^m = 0.4(m-1)$ for $m > 1$. For $m = 1$, we set $\alpha_0^1 = 10$. For each mixed object class, a multinomial distribution for each modality was prepared by mixing the distributions of the two pure object classes. Specifically, the multinomial distribution for the $i$-th mixed object was obtained by averaging those of the $(2i-1)$-th and the $2i$-th object classes.
The observations for each modality of each object were drawn from the multinomial distributions corresponding to the object's class. The count of the BoFs for each modality was set to 20. Finally, 42 pure virtual objects and 21 mixed virtual objects were generated.

The experiment was performed almost in the same way as experiment 1. First, multimodal categorization was performed for the 63 virtual objects, and 14 categories were successfully formed in an unsupervised manner. The posterior distributions over the object categories are shown in Fig.~\ref{fig:exp2colormap}.
Generally speaking, mixed objects were categorized into two or more classes. After categorization, a virtual robot was asked to recognize all of the target objects using the proposed active perception method. 

\begin{figure}[tb!p]
\begin{center}
\includegraphics[width=0.49\linewidth]{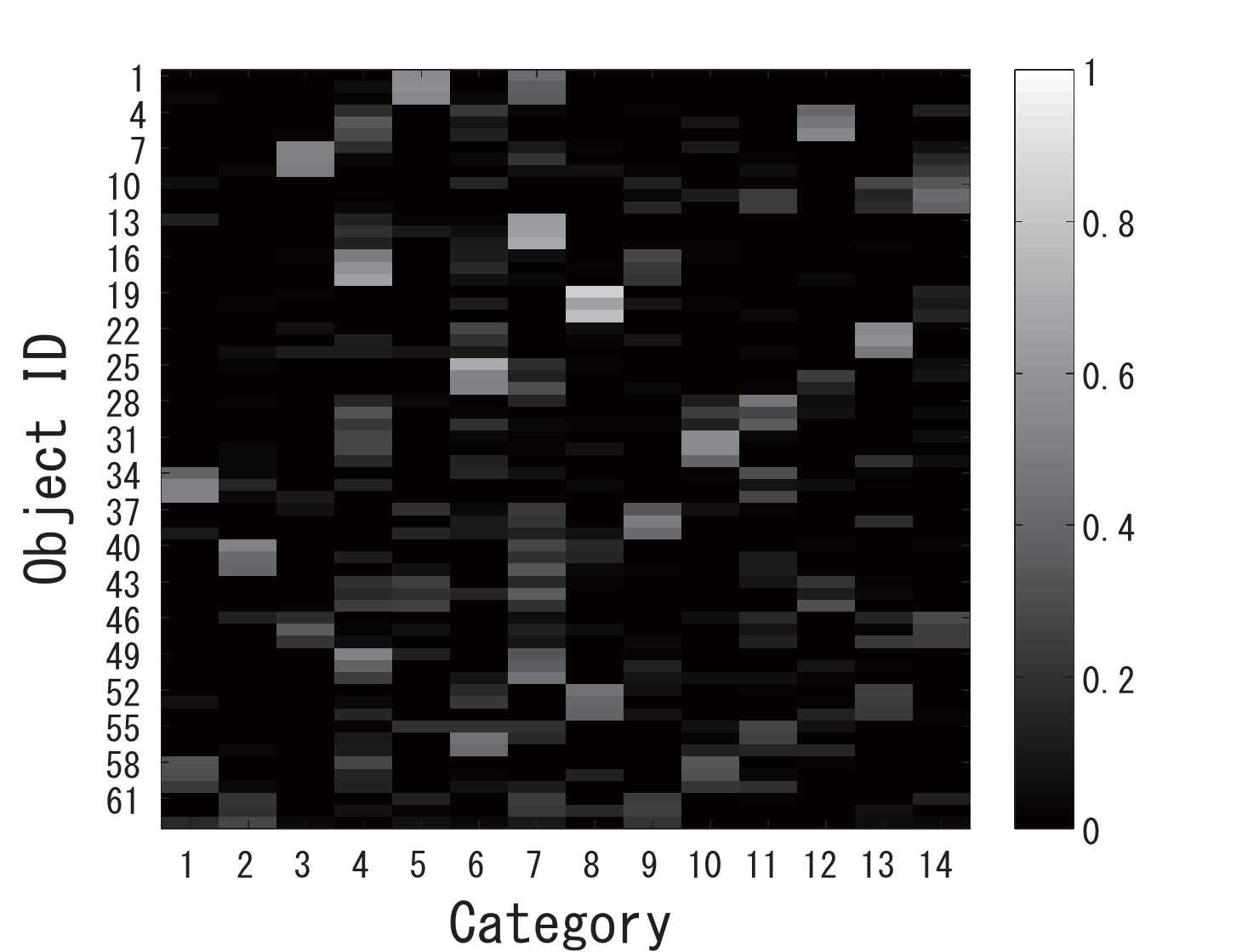}
\caption{Categorization results for the posterior probability distributions for each object.}
\label{fig:exp2colormap}
\end{center}
\end{figure}

\subsection{Results}
We compared the greedy, lazy greedy, and random algorithms for the active perception sequential decision making process. The random algorithm is a baseline method that determines the next action randomly from the remaining actions that have not been taken. In other words, the random algorithm is the case in which a robot does not employ any active perception algorithms.

The KL divergence from the final state for all target objects is averaged at each step and shown in Fig.~\ref{fig:flow_syn}. For each condition, the KL divergence gradually decreased and reached almost zero. However, the rate of decrease was different. The greedy and lazy greedy algorithms were clearly shown to be better solutions on average than the random algorithm. In contrast with experiment 1, the best and worst cases could not practically be calculated because of the prohibitive computational cost. Interestingly, the lazy greedy algorithm has almost the same performance as the greedy algorithm, as the theory suggests,
although the laziness reduced the computational cost in reality. 

\begin{figure}[tb!p]
\begin{center}
\includegraphics[width=0.7\linewidth]{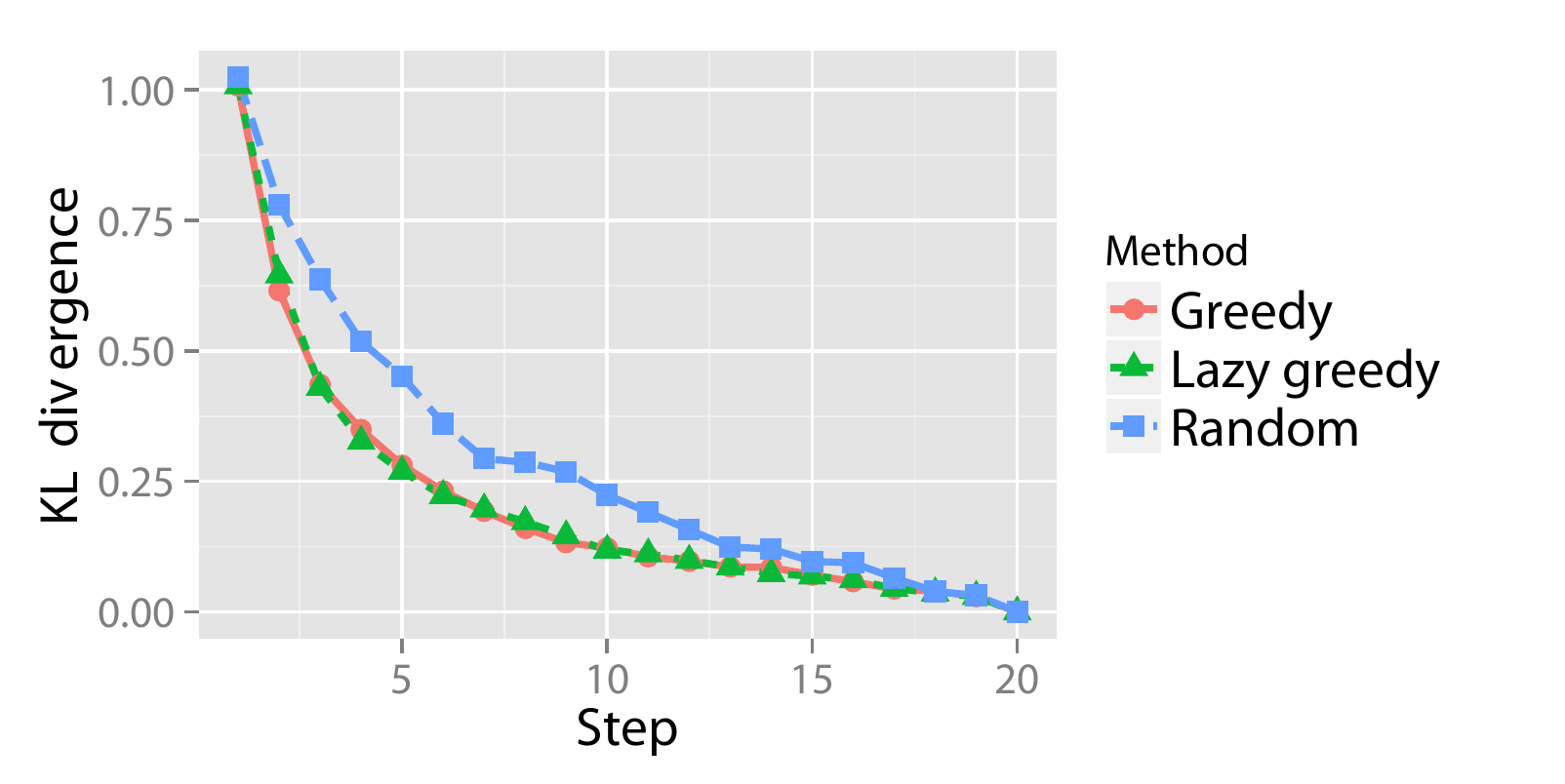}
\caption{KL divergence from the final state at each step for each sequential action selection procedure. }
\label{fig:flow_syn}
\end{center}
\end{figure}

The number of times the robot evaluated $\IG_m$ to determine the action sequences for all executable counts of actions $ L = 1, 2, \ldots , M$ is summarized for each method.
The number of times the lazy greedy algorithm was required for each target object was $71.7$ $(SD = 5.2)$ on average, and that of the greedy algorithm was $190$.
Theoretically, the greedy and lazy greedy algorithms require $O(M^2)$ evaluations.
Practically, the number of re-evaluations needed by the lazy greedy algorithm is quite small.
In contrast, the brute-force algorithm requires $O(2^M)$ evaluations, i.e., far more evaluations of $\IG$ are required. 

Next, a case in which two classes were assigned to the same object was investigated. The target dataset contained ``mixed'' objects. The results also imply that our method works well even when two classes are assigned to the same object. This is because our theory is completely derived on the basis of the probabilistic generative model, i.e., the MHDP. We show a typical result. 

Fig.~\ref{fig:posterior51} shows the time series of the posterior probability of the category for object 51, i.e., one of the mixed objects, during sequential active perception. This shows that the greedy and lazy greedy algorithms quickly categorized the target object into two categories ``correctly.'' Our formulation assumes the categorization result to be a posterior distribution. Therefore, this type of probabilistic case can be treated naturally.

\begin{figure}[tb!p]
\begin{center}
\includegraphics[width=0.7\linewidth]{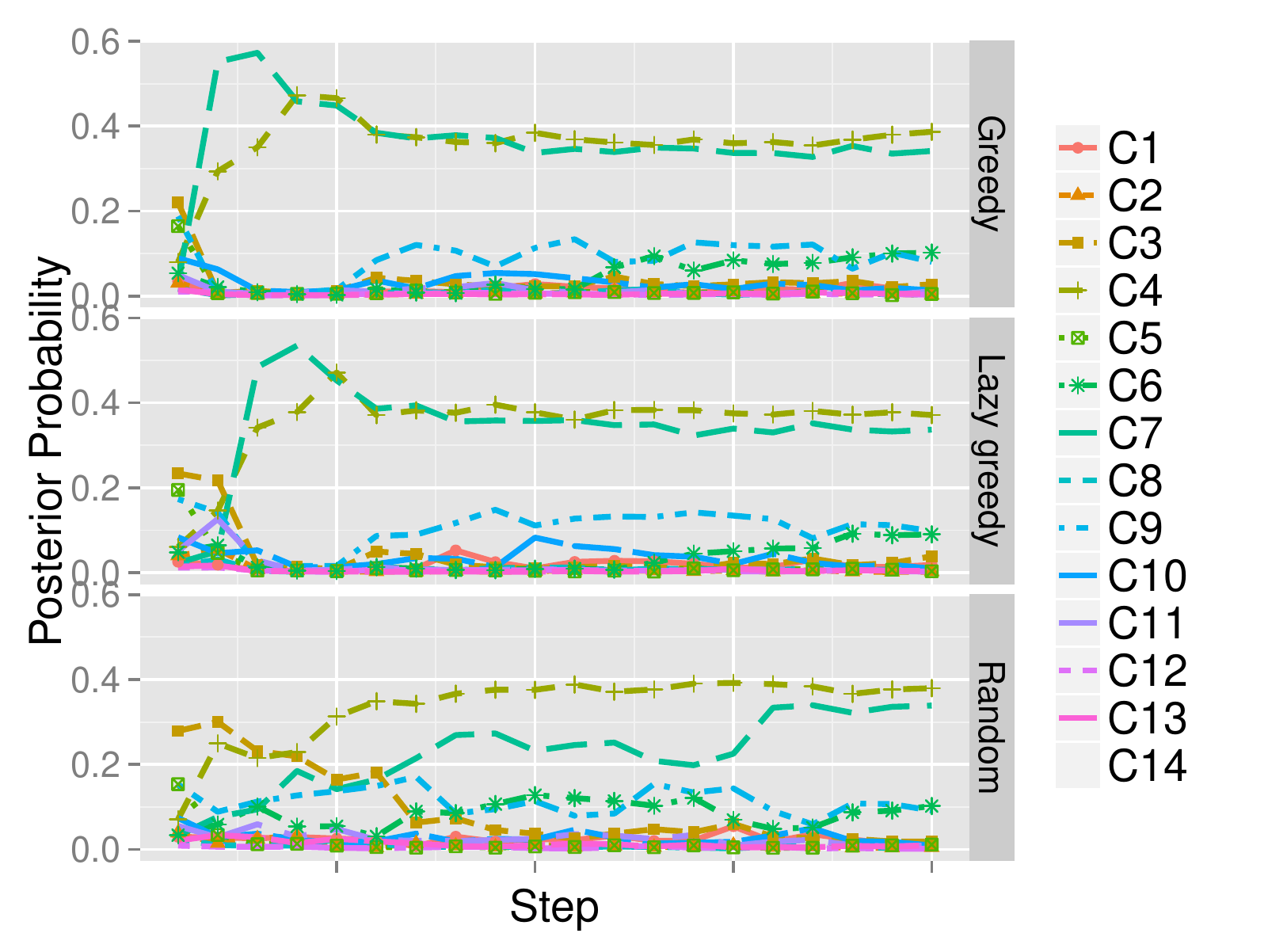}
\caption{Time series of the posterior probability of the category for object 51 during sequential action selection based on (top) the greedy algorithm, (middle) the lazy greedy algorithm, and (bottom) the random selection procedure. }
\label{fig:posterior51}
\end{center}
\end{figure}

\section{Conclusion}\label{sec7}
In this paper, we described an MHDP-based active perception method for robotic multimodal object category recognition. We formulated a new active perception method on the basis of the MHDP~\cite{MHDP} . 

First, we proposed an action selection method based on the IG criterion and proved that
 IG is an optimal criterion for active perception from the viewpoint of reducing the expected KL divergence between the final and current recognition states. Second, we derived a Monte Carlo approximation method for evaluating $\IG$ efficiently and made the action selection method executable. Third, we proved that the IG has a submodular property and reduced the sequential active perception problem to a submodular maximization problem. 
 Given the theoretical results, we proposed to use the lazy greedy algorithm for selecting a set of actions for active perception. 
 It is important to note that all of the three theoretical contributions mentioned above were naturally derived from the characteristics of the MHDP. These contributions are clearly a result of the theoretical soundness of the MHDP. In this sense, our theorems reveal a new advantage of the MHDP that other several heuristic multimodal object categorization methods do not have.
 
 To evaluate the proposed methods empirically, we conducted experiments using an upper-torso humanoid robot and a synthetic dataset. Our results showed that the method enables the robot to actively select actions and recognize target objects quickly and accurately.

One of the most interesting points of this paper is that not only object categories but also an action selection policy for object recognition can be formed in an unsupervised manner.
From the viewpoint of cognitive developmental robotics, providing an unsupervised learning model for bridging the development between perceptual and action systems is meaningful for shedding a new light on the computational understanding of cognitive development~\cite{Cangelosi2015,Asada2009a}. 
It is believed that the coupling of action and perception is important for an embodied cognitive system~\cite{Pfeifer2001}. 

The advantage of this paper compared with the related works 
is that our action selection method for multimodal category recognition has a clear theoretical basis and is tightly connected to the computational model for multimodal object categorization, i.e., MHDP.
This fact gives our active perception method a theoretical guarantee of its the performance.

Our directions for future research are as follows.
In addition to active perception, active ``learning'' for multimodal categorization is also an important research topic. It takes a longer time for a robot to gather multimodal information to form  multimodal object categories from a massive number of daily objects than it does to recognize a new object. If a robot can notice that ``the object is obviously a sample of learned category,'' the robot need not obtain knowledge about object categories from such an object. In contrast, if a target object appears to be completely new to the robot, the robot should carefully interact with the object to obtain multimodal information from the object. Such a scenario will be achieved by developing an active ``learning'' method for multimodal categorization. It is likely that such a method  will be able to be obtained by extending our proposed active perception method. 

In addition, the MHDP model treated in this paper assumed that an action for perception is related to only one modality, e.g., grasping only corresponds to $m^h$. However, in reality, when we interact with an object with a specific action, e.g., grasping, shaking, or hitting, we obtain rich information related to various modalities. For example, when we shake a box to obtain auditory information, we also unwittingly obtain haptic information and information about its weight. The tight linkage between the modality information and an action is a type of approximation taken in this research. An extension of our model and the MHDP to a model that can treat actions that are related to various modalities is also a task for our future work.

\appendix
\section{Proof of the Optimality of the Proposed Active Perception Strategy}
In this appendix, we show that the proposed active perception strategy, which maximizes the expected KL divergence between the current state and the posterior distribution of $z_j$ after a selected set of actions, minimizes the expected KL divergence between the next and final states. 
\begin{eqnarray}
&&\argmin_{\mathbf{A} \in \mathbf{F}^{\mathbf{m_o}_j}_L}
	\mathbb{E}_{ X_j^{\mathbf{M}\setminus \mathbf{m_o}_j} | X^{\mathbf{m_o}_j}_j }
		\big[ \KL \big( P( z_j | X_j^{\mathbf{M}} ) , P( z_j | X_j^{\mathbf{A} \cup \mathbf{m_o}_j} ) \big)  \big] \nonumber\\
&=& \argmin_{\mathbf{A} \in \mathbf{F}^{\mathbf{m_o}_j}_L}
	\sum_{X_j^{\mathbf{M}\setminus \mathbf{m_o}_j}}\sum_{z_j}
		\big[P(X_j^{\mathbf{M}\setminus \mathbf{m_o}_j} | X^{\mathbf{m_o}_j}_j)
            P(z_j|X_j^\mathbf{M}) \log \frac{P(z_j|X_j^\mathbf{M})}{P(z_j |X^{\mathbf{m_o}_j}_j,X_j^\mathbf{A})} \big] \label{eq:ap1}
\end{eqnarray}
The numerator inside of the log function does not depend on $\mathbf{A} $. Therefore, the term related to the numerator can be deleted. In addition, by negating the remaining term, we obtain			
\begin{eqnarray}
(\ref{eq:ap1})&=& \argmax_{\mathbf{A} \in \mathbf{F}^{\mathbf{m_o}_j}_L}
	\sum_{X_j^{\mathbf{M}\setminus \mathbf{m_o}_j}}\sum_{z_j}
		[P(X_j^{\mathbf{M}\setminus \mathbf{m_o}_j} | X^{\mathbf{m_o}_j}_j)
        P(z_j|X_j^\mathbf{M}) \log  P(z_j |X^{\mathbf{m_o}_j}_j,X_j^\mathbf{A} )] \nonumber\\
&=& \argmax_{\mathbf{A} \in \mathbf{F}^{\mathbf{m_o}_j}_L}
	\sum_{X_j^{\mathbf{M}\setminus \mathbf{m_o}_j}}\sum_{z_j}
		[P(z_j, X_j^{\mathbf{M}\setminus \mathbf{m_o}_j} | X^{\mathbf{m_o}_j}_j)
        \log P(z_j |X^{\mathbf{m_o}_j}_j,X_j^\mathbf{A} )]. \label{eq:ap2}
\end{eqnarray}			
By marginalizing $X_j^{\mathbf{M}\setminus (\mathbf{m_o}_j \cup \mathbf{A})}$ from (\ref{eq:ap2}), we obtain
\begin{align}		
(\ref{eq:ap2}) =& \argmax_{\mathbf{A} \in \mathbf{F}^{\mathbf{m_o}_j}_L}
	\sum_{X_j^\mathbf{A}}\sum_{z_j}
		P(z_j, X_j^\mathbf{A} | X^{\mathbf{m_o}_j}_j)
			\log  P(z_j |X^{\mathbf{m_o}_j}_j,X_j^\mathbf{A} )\nonumber\\
= &\argmax_{\mathbf{A} \in \mathbf{F}^{\mathbf{m_o}_j}_L}
	[ \big( \sum_{X_j^\mathbf{A}}\sum_{z_j}
		P(z_j, X_j^\mathbf{A} | X^{\mathbf{m_o}_j}_j)
			\log P(z_j |X^\mathbf{m_o}_j,X_j^\mathbf{A} ) \big)
			\nonumber\\&~~\times \big( - \underbrace{\sum_{z_j} P(z_j|X_j^{\mathbf{m_o}_j}) \log P(z_j | X_j^{\mathbf{m_o}_j})}_{constant\ w.r.t.\ \mathbf{A}} \big) ]\nonumber\\
=& \argmax_{\mathbf{A} \in \mathbf{F}^{\mathbf{m_o}_j}_L}
	\big[\sum_{X_j^\mathbf{A}}\sum_{z_j}
		[P(X^\mathbf{A}_j | X_j^{\mathbf{m_o}_j}) P(z_j|X_j^{\mathbf{m_o}_j}, X^\mathbf{A}_j)
            \log  P(z_j |X_j^{\mathbf{m_o}_j},X_j^\mathbf{A} )]
		\nonumber\\&~~	- \sum_{X_j^\mathbf{A}}\sum_{z_j}
			[P(X^\mathbf{A}_j | X_j^{\mathbf{m_o}_j}) P(z_j|X_j^{\mathbf{m_o}_j}, X^\mathbf{A}_j)
			\log P(z_j | X_j^{\mathbf{m_o}_j})]\big] \nonumber\\
=& \argmax_{\mathbf{A} \in \mathbf{F}^{\mathbf{m_o}_j}_L}
	\sum_{X_j^\mathbf{A}}\sum_{z_j}
		[ P(X_j^\mathbf{A} | X^{\mathbf{m_o}_j}_j)
			 \KL \big( P(z_j |X^{\mathbf{m_o}_j}_j,X_j^\mathbf{A} ), P(z_j | X_j^{\mathbf{m_o}_j}) \big)] \nonumber\\
=& \argmax_{\mathbf{A} \in \mathbf{F}^{\mathbf{m_o}_j}_L}  \mathbb{E}_{ X_j^{\mathbf{A} } | X_j^{\mathbf{m_o}_j} } [ \KL \big( P( z_j | X_j^{ \mathbf{A} \cup \mathbf{m_o}_j})  , P( z_j | X_j^{\mathbf{m_o}_j}) \big) ].\nonumber
\end{align}

\section*{Acknowledgment}
The authors would like to thank undergraduate student Takuya Takeshita and graduate student Hajime Fukuda of Ritsumeikan University, who helped us develop the experimental instruments for obtaining our preliminary results.
This research was partially supported by a Grant-in-Aid for Young Scientists (B) 2012-2014 (24700233) funded by the Ministry of Education, Culture, Sports, Science, and Technology, Japan,  Tateishi Science and Technology Foundation, and JST, CREST.

\vskip 0.2in
\bibliography{refs}

\begin{thebibliography}{53}
\providecommand{\natexlab}[1]{#1}
\providecommand{\url}[1]{\texttt{#1}}
\expandafter\ifx\csname urlstyle\endcsname\relax
  \providecommand{\doi}[1]{doi: #1}\else
  \providecommand{\doi}{doi: \begingroup \urlstyle{rm}\Url}\fi

\bibitem[Ando et~al.(2013)Ando, Nakamura, Araki, and Nagai]{Ando}
Yoshiki Ando, Tomoaki Nakamura, Takaya Araki, and Takayuki Nagai.
\newblock Formation of hierarchical object concept using hierarchical latent
  dirichlet allocation.
\newblock In \emph{IEEE/RSJ International Conference on Intelligent Robots and
  Systems}, pages 2272--2279, 2013.

\bibitem[Araki et~al.(2012)Araki, Nakamura, Nagai, Nagasaka, Taniguchi, and
  Iwahashi]{Araki2012}
Takaya Araki, Tomoaki Nakamura, Takayuki Nagai, Shogo Nagasaka, Tadahiro
  Taniguchi, and Naoto Iwahashi.
\newblock {Online learning of concepts and words using multimodal LDA and
  hierarchical Pitman-Yor Language Model}.
\newblock In \emph{IEEE/RSJ International Conference on Intelligent Robots and
  Systems}, pages 1623--1630, 2012.

\bibitem[Asada et~al.(2009)Asada, Hosoda, Kuniyoshi, Ishiguro, Inui, Yoshikawa,
  Ogino, and Yoshida]{Asada2009a}
Minoru Asada, Koh Hosoda, Yasuo Kuniyoshi, Hiroshi Ishiguro, Toshio Inui,
  Yuichiro Yoshikawa, Masaki Ogino, and Chisato Yoshida.
\newblock {Cognitive Developmental Robotics: A Survey}.
\newblock \emph{IEEE Transactions on Autonomous Mental Development}, 1\penalty0
  (1):\penalty0 12--34, 2009.

\bibitem[Barsalou(1999)]{Barsalou1999}
Lawrence~W. Barsalou.
\newblock {Perceptual symbol systems}.
\newblock \emph{Behavioral and Brain Sciences}, 22\penalty0 (04):\penalty0
  1--16, 1999.
\newblock ISSN 0140-525X.
\newblock \doi{10.1017/S0140525X99002149}.

\bibitem[Blei et~al.(2003)Blei, Ng, and Jordan]{Blei2003}
David~M Blei, Andrew~Y Ng, and Michael~I Jordan.
\newblock Latent dirichlet allocation.
\newblock \emph{the Journal of machine Learning research}, 3:\penalty0
  993--1022, 2003.

\bibitem[Borotschnig et~al.(2000)Borotschnig, Paletta, Prantl, and
  Pinz]{Borotschnig2000}
H~Borotschnig, L~Paletta, M~Prantl, and A~Pinz.
\newblock {Appearance-based active object recognition}.
\newblock \emph{Image and Vision Computing}, 18:\penalty0 715--727, 2000.

\bibitem[Burgard et~al.(1997)Burgard, Fox, and Thrun]{Fox}
Wolfram Burgard, Dieter Fox, and Sebastian Thrun.
\newblock {Active Mobile Robot Localization}.
\newblock In \emph{Proceedings of the Fourteenth International Joint Conference
  on Artificial Intelligence (IJCAI)}, pages 1346--1352, 1997.

\bibitem[Cangelosi and Schlesinger(2015)]{Cangelosi2015}
Angelo Cangelosi and Matthew Schlesinger.
\newblock \emph{{Developmental Robotics}}.
\newblock The MIT press, 2015.

\bibitem[Celikkanat et~al.(2014)Celikkanat, Orhan, Pugeault, Guerin, Erol, and
  Kalkan]{Celikkanat}
Hande Celikkanat, Guner Orhan, Nicolas Pugeault, Frank Guerin, Sahin Erol, and
  Sinan Kalkan.
\newblock {Learning and Using Context on a Humanoid Robot Using Latent
  Dirichlet Allocation}.
\newblock In \emph{Joint IEEE International Conferences on Development and
  Learning and Epigenetic Robotics (ICDL-Epirob)}, pages 201--207, 2014.

\bibitem[Cohn et~al.(1996)Cohn, Ghahramani, and Jordan]{Cohn1996}
David~a. Cohn, Zoubin Ghahramani, and Michael~I. Jordan.
\newblock {Active learning with statistical models}.
\newblock \emph{Journal of Artificial Intelligence Research}, 4:\penalty0
  129--145, 1996.

\bibitem[Denzler and Brown(2002)]{Denzler2002}
Joachim Denzler and Christopher~M Brown.
\newblock {Information Theoretic Sensor Data Selection for Active Object
  Recognition and State Estimation}.
\newblock \emph{IEEE Transactions on pattern analysis and machine
  intelligence}, 24\penalty0 (2):\penalty0 1--13, 2002.

\bibitem[{Dutta Roy} et~al.(2004){Dutta Roy}, Chaudhury, and
  Banerjee]{DuttaRoy2004a}
Sumantra {Dutta Roy}, Santanu Chaudhury, and Subhashis Banerjee.
\newblock {Active recognition through next view planning: a survey}.
\newblock \emph{Pattern Recognition}, 37\penalty0 (3):\penalty0 429--446, 2004.

\bibitem[Eidenberger and Scharinger(2010)]{Eidenberger2010a}
R.~Eidenberger and J.~Scharinger.
\newblock {Active perception and scene modeling by planning with probabilistic
  6D object poses}.
\newblock In \emph{IEEE/RSJ International Conference on Intelligent Robots and
  Systems}, pages 1036--1043, 2010.

\bibitem[Fishel and Loeb(2012)]{Fishel2012}
Jeremy~A. Fishel and Gerald~E. Loeb.
\newblock {Bayesian exploration for intelligent identification of textures}.
\newblock \emph{Frontiers in Neurorobotics}, 6:\penalty0 1--20, 2012.
\newblock ISSN 16625218.

\bibitem[Gouko et~al.(2013)Gouko, Kobayashi, and Kim]{Gouko}
Manabu Gouko, Yuichi Kobayashi, and Chyon~Hae Kim.
\newblock {Online Exploratory Behavior Acquisition of Mobile Robot Based on
  Reinforcement Learning}.
\newblock In \emph{Recent Trends in Applied Artificial Intelligence}, pages
  272--281. 2013.

\bibitem[Griffith et~al.(2012)Griffith, Sinapov, Sukhoy, and
  Stoytchev]{Griffith2012}
Shane Griffith, Jivko Sinapov, Vladimir Sukhoy, and Alexander Stoytchev.
\newblock {A behavior-grounded approach to forming object categories:
  Separating containers from noncontainers}.
\newblock \emph{IEEE Transactions on Autonomous Mental Development}, 4\penalty0
  (1):\penalty0 54--69, 2012.

\bibitem[Harnad(1990)]{harnad1990symbol}
Stevan Harnad.
\newblock The symbol grounding problem.
\newblock \emph{Physica D: Nonlinear Phenomena}, 42\penalty0 (1):\penalty0
  335--346, 1990.

\bibitem[Hogman et~al.(2013)Hogman, Bjorkman, and Kragic]{Hogman2013}
Virgile Hogman, Mats Bjorkman, and Danica Kragic.
\newblock Interactive object classification using sensorimotor contingencies.
\newblock In \emph{IEEE/RSJ International Conference on Intelligent Robots and
  Systems (IROS)}, pages 2799--2805, 2013.

\bibitem[Ivaldi et~al.(2014)Ivaldi, Nguyen, Lyubova, Droniou, Padois, Filliat,
  Oudeyer, and Sigaud]{Ivaldi2014a}
Serena Ivaldi, Sao~Mai Nguyen, Natalia Lyubova, Alain Droniou, Vincent Padois,
  David Filliat, Pierre-Yves Oudeyer, and Olivier Sigaud.
\newblock {Object Learning Through Active Exploration}.
\newblock \emph{IEEE Transactions on Autonomous Mental Development}, 6\penalty0
  (1):\penalty0 56--72, 2014.

\bibitem[Iwahashi et~al.(2010)Iwahashi, Sugiura, Taguchi, Nagai, and
  Taniguchi]{Iwahashi2010}
Naoto Iwahashi, Komei Sugiura, Ryo Taguchi, Takayuki Nagai, and Tadahiro
  Taniguchi.
\newblock {Robots That Learn to Communicate: A Developmental Approach to
  Personally and Physically Situated Human-Robot Conversations}.
\newblock In \emph{Dialog with Robots Papers from the AAAI Fall Symposium},
  pages 38--43, 2010.

\bibitem[Ji and Carin(2006)]{Ji2006}
Shihao Ji and Lawrence Carin.
\newblock {Cost-Sensitive Feature Acquisition and Classification}.
\newblock \emph{Pattern Recognition}, 40\penalty0 (5):\penalty0 1474--1485,
  2006.

\bibitem[Krainin et~al.(2011)Krainin, Curless, and Fox]{Krainin2011a}
Michael Krainin, Brian Curless, and Dieter Fox.
\newblock {Autonomous generation of complete 3D object models using next best
  view manipulation planning}.
\newblock In \emph{IEEE International Conference on Robotics and Automation},
  pages 5031--5037, 2011.

\bibitem[Krause and Guestrin(2005)]{Krause05}
Andreas Krause and Carlos~E. Guestrin.
\newblock {Near-optimal Nonmyopic Value of Information in Graphical Models}.
\newblock In \emph{Proceedings of the Twenty-First Conference on Uncertainty in
  Artificial Intelligence}, 2005.

\bibitem[Lowe(2004)]{SIFT}
David~G Lowe.
\newblock Distinctive image features from scale-invariant keypoints.
\newblock \emph{International journal of computer vision}, 60\penalty0
  (2):\penalty0 91--110, 2004.

\bibitem[MacKay(2003)]{MacKay2003}
David J.~C. MacKay.
\newblock \emph{Information Theory, Inference and Learning Algorithms}.
\newblock Cambridge University Press, 2003.

\bibitem[Minoux(1978)]{Minoux78}
Michel Minoux.
\newblock Accelerated greedy algorithms for maximizing submodular set
  functions.
\newblock In \emph{Optimization Techniques}, pages 234--243. Springer, 1978.

\bibitem[Muslea et~al.(2006)Muslea, Minton, and Knoblock]{Muslea2006}
Ion Muslea, Steven Minton, and Craig~a. Knoblock.
\newblock {Active learning with multiple views}.
\newblock \emph{Journal of Artificial Intelligence Research}, 27\penalty0
  (1):\penalty0 203--233, 2006.

\bibitem[Nakamura et~al.(2007)Nakamura, Nagai, and Iwahashi]{Nakamura2007}
Tomoaki Nakamura, Takayuki Nagai, and Naoto Iwahashi.
\newblock {Multimodal object categorization by a robot}.
\newblock In \emph{IEEE/RSJ International Conference on Intelligent Robots and
  Systems}, pages 2415--2420, 2007.

\bibitem[Nakamura et~al.(2009)Nakamura, Nagai, and Iwahashi]{Nakamura2009}
Tomoaki Nakamura, Takayuki Nagai, and Naoto Iwahashi.
\newblock {Grounding of word meanings in multimodal concepts using LDA}.
\newblock In \emph{IEEE/RSJ International Conference on Intelligent Robots and
  Systems}, pages 3943--3948, 2009.

\bibitem[Nakamura et~al.(2011{\natexlab{a}})Nakamura, Nagai, and
  Iwahashi]{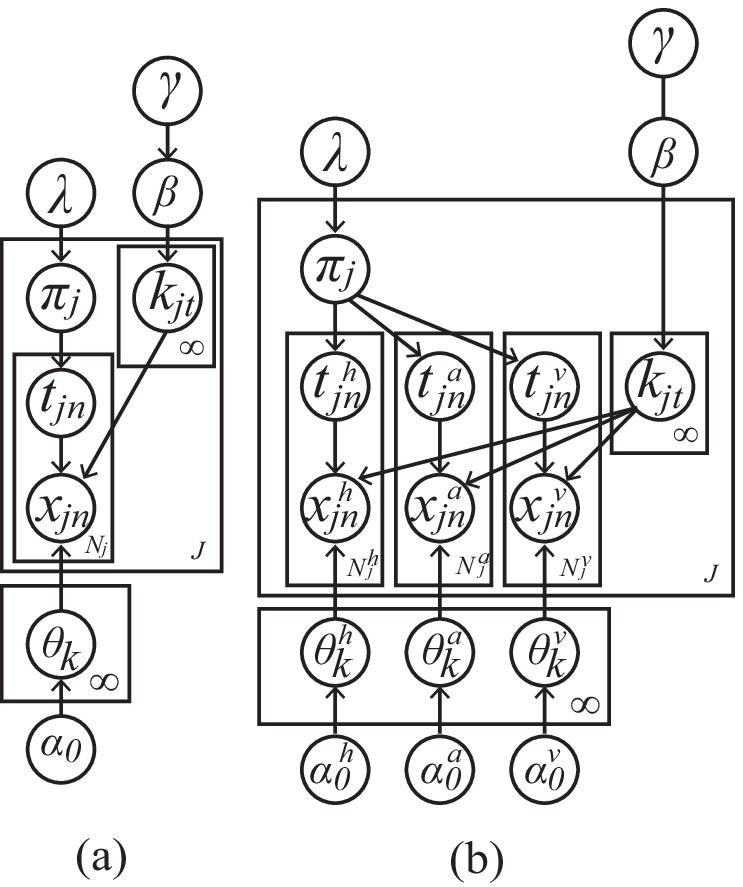}
Tomoaki Nakamura, Takayuki Nagai, and Naoto Iwahashi.
\newblock Multimodal categorization by hierarchical dirichlet process.
\newblock In \emph{IEEE/RSJ International Conference on Intelligent Robots and
  Systems}, pages 1520--1525, 2011{\natexlab{a}}.

\bibitem[Nakamura et~al.(2011{\natexlab{b}})Nakamura, Nagai, and
  Iwahashi]{Nakamura2011}
Tomoaki Nakamura, Takayuki Nagai, and Naoto Iwahashi.
\newblock {Bag of multimodal LDA models for concept formation}.
\newblock \emph{IEEE International Conference on Robotics and Automation},
  pages 6233--6238, 2011{\natexlab{b}}.

\bibitem[Nakamura et~al.(2014)Nakamura, Nagai, Funakoshi, Nagasaka, Taniguchi,
  and Iwahashi]{Nakamura2014}
Tomoaki Nakamura, Takayuki Nagai, Kotaro Funakoshi, Shogo Nagasaka, Tadahiro
  Taniguchi, and Naoto Iwahashi.
\newblock {Mutual Learning of an Object Concept and Language Model Based on
  MLDA and NPYLM}.
\newblock In \emph{IEEE/RSJ International Conference on Intelligent Robots and
  Systems (IROS'14)}, pages 600 -- 607, 2014.

\bibitem[Natale et~al.(2004)Natale, Metta, and Sandini]{Natale}
Lorenzo Natale, Giorgio Metta, and Giulio Sandini.
\newblock {Learning haptic representation of objects}.
\newblock In \emph{{International Conference of Intelligent Manipulation and
  Grasping},}, 2004.

\bibitem[Nemhauser et~al.(1978)Nemhauser, Wolsey, and Fisher]{Nemhauser78}
George~L Nemhauser, Laurence~A Wolsey, and Marshall~L Fisher.
\newblock An analysis of approximations for maximizing submodular set
  functions-{I}.
\newblock \emph{Mathematical Programming}, 14\penalty0 (1):\penalty0 265--294,
  1978.

\bibitem[Pape et~al.(2012)Pape, Oddo, Controzzi, Cipriani, F\"{o}rster,
  Carrozza, and Schmidhuber]{Pape2012}
Leo Pape, Calogero~M. Oddo, Marco Controzzi, Christian Cipriani, Alexander
  F\"{o}rster, Maria~C. Carrozza, and J\"{u}rgen Schmidhuber.
\newblock {Learning tactile skills through curious exploration}.
\newblock \emph{Frontiers in Neurorobotics}, 6:\penalty0 1--16, 2012.

\bibitem[Pfeifer and Scheier(2001)]{Pfeifer2001}
Rolf Pfeifer and Christian Scheier.
\newblock \emph{Understanding Intelligence}.
\newblock A Bradford Book, 2001.
\newblock ISBN 9780262661256.

\bibitem[Rebguns et~al.(2011)Rebguns, Ford, and Fasel]{Rebguns2011}
Antons Rebguns, Daniel Ford, and Ian Fasel.
\newblock {InfoMax Control for Acoustic Exploration of Objects by a Mobile
  Robot}.
\newblock In \emph{AAAI11 Workshop on Lifelong Learning}, pages 22--28, 2011.
\newblock ISBN 9781577355311.

\bibitem[Roy and Pentland(2002)]{Roy2002a}
Deb~K. Roy and Alex~P. Pentland.
\newblock {Learning words from sights and sounds: a computational model}.
\newblock \emph{Cognitive Science}, 26\penalty0 (1):\penalty0 113--146, 2002.

\bibitem[Roy and Thrun(1999)]{Roy}
Nicholas Roy and Sebastian Thrun.
\newblock {Coastal Navigation with Mobile Robots}.
\newblock In \emph{Advances in Neural Processing Systems 12}, 1999.

\bibitem[Russo and Roy(2015)]{Russo2015}
Daniel Russo and Benjamin~Van Roy.
\newblock An information-theoretic analysis of thompson sampling, 2015.
\newblock arXiv:1403.5341v2.

\bibitem[Saegusa et~al.(2011)Saegusa, Natale, Metta, and Sandini]{Saegusa2011}
Ryo Saegusa, Lorenzo Natale, Giorgio Metta, and Giulio Sandini.
\newblock {Cognitive Robotics - Active Perception of the Self and Others -}.
\newblock In \emph{the 4th International Conference on Human System
  Interactions (HSI)}, pages 419--426, 2011.

\bibitem[Schneider et~al.(2009)Schneider, Sturm, Stachniss, Reisert, Burkhardt,
  and Burgard]{Schneider2009a}
Alexander Schneider, Jurgen Sturm, Cyrill Stachniss, Marco Reisert, Hans
  Burkhardt, and Wolfram Burgard.
\newblock {Object identification with tactile sensors using bag-of-features}.
\newblock In \emph{IEEE/RSJ International Conference on Intelligent Robots and
  Systems}, pages 243--248, 2009.

\bibitem[Settles(2012)]{Settles2012}
Burr Settles.
\newblock Active learning.
\newblock \emph{Synthesis Lectures on Artificial Intelligence and Machine
  Learning}, 6\penalty0 (1):\penalty0 1--114, 2012.

\bibitem[Sinapov and Stoytchev(2011)]{Sinapov}
Jivko Sinapov and Alexander Stoytchev.
\newblock {Object Category Recognition by a Humanoid Robot Using
  Behavior-Grounded Relational Learning}.
\newblock In \emph{IEEE International Conference on Robotics and Automation
  (ICRA)}, pages 184 -- 190, 2011.

\bibitem[Sinapov et~al.(2014)Sinapov, Schenck, Staley, Sukhoy, and
  Stoytchev]{Sinapov2014}
Jivko Sinapov, Connor Schenck, Kerrick Staley, Vladimir Sukhoy, and Alexander
  Stoytchev.
\newblock {Grounding semantic categories in behavioral interactions:
  Experiments with 100 objects}.
\newblock \emph{Robotics and Autonomous Systems}, 62\penalty0 (5):\penalty0
  632--645, 2014.

\bibitem[Stachniss et~al.(2005)Stachniss, Grisetti, and Burgard]{Stachniss}
C.~Stachniss, G.~Grisetti, and W.~Burgard.
\newblock {Information Gain-based Exploration Using Rao-Blackwellized Particle
  Filters}.
\newblock In \emph{Robotics Science and Systems (RSS)}, 2005.

\bibitem[Sudderth et~al.(2005)Sudderth, Torralba, Freeman, and
  Willsky]{Sudderth2006}
E.~B. Sudderth, A.~Torralba, W.~Freeman, and A.~S. Willsky.
\newblock {Describing Visual Scenes using Transformed Dirichlet Processes}.
\newblock In \emph{Advances in Neural Information Processing Systems},
  volume~18, pages 1297--1304, 2005.

\bibitem[Sushkov and Sammut(2012)]{Sushkov2012}
Oleg~O Sushkov and Claude Sammut.
\newblock Active robot learning of object properties.
\newblock In \emph{Intelligent Robots and Systems (IROS), 2012 IEEE/RSJ
  International Conference on}, pages 2621--2628. IEEE, 2012.

\bibitem[Taniguchi et~al.(2015)Taniguchi, Nagai, Nakamura, Iwahashi, Ogata, and
  Asoh]{Taniguchi2015SER}
Tadahiro Taniguchi, Takayuki Nagai, Tomoaki Nakamura, Naoto Iwahashi, Tetsuya
  Ogata, and Hideki Asoh.
\newblock Symbol emergence in robotics: A survey, 2015.
\newblock arXiv:1509.08973.

\bibitem[Teh et~al.(2006)Teh, Jordan, Beal, and Blei]{hdp}
Y.W. Teh, M.I. Jordan, M.J. Beal, and D.M. Blei.
\newblock {Hierarchical {Dirichlet} processes}.
\newblock \emph{Journal of the American Statistical Association}, 101\penalty0
  (476):\penalty0 1566--1581, 2006.

\bibitem[Tuci et~al.(2010)Tuci, Massera, and Nolfi]{Tuci2010}
Elio Tuci, Gianluca Massera, and Stefano Nolfi.
\newblock {Active Categorical Perception of Object Shapes in a Simulated
  Anthropomorphic Robotic Arm}.
\newblock \emph{IEEE Transactions on Evolutionary Computation}, 14\penalty0
  (6):\penalty0 885--899, 2010.

\bibitem[van Hoof et~al.(2012)van Hoof, Kroemer, {Ben Amor}, and
  Peters]{VanHoof2012a}
Herke van Hoof, Oliver Kroemer, Heni {Ben Amor}, and Jan Peters.
\newblock {Maximally informative interaction learning for scene exploration}.
\newblock In \emph{IEEE/RSJ International Conference on Intelligent Robots and
  Systems}, pages 5152--5158, 2012.

\bibitem[Velez et~al.(2012)Velez, Hemann, Huang, Posner, and Roy]{Velez2012}
Javier Velez, Garrett Hemann, Albert~S. Huang, Ingmar Posner, and Nicholas Roy.
\newblock {Modelling observation correlations for active exploration and robust
  object detection}.
\newblock \emph{Journal of Artificial Intelligence Research}, 44:\penalty0
  423--453, 2012.

\end{thebibliography}
\bibliographystyle{theapa}

\end{document}